\theoremstyle{plain} \newtheorem{thm}{Theorem}
 \newtheorem{lem}[thm]{Lemma}
\newtheorem{prop}[thm]{Proposition}
\theoremstyle{definition} \newtheorem{defn}{Definition}
\theoremstyle{remark} 
\begin{document}

%

%

\twocolumn[

\aistatstitle{Spectral Clustering with Unbalanced Data}

\aistatsauthor{ Jing Qian \And Venkatesh Saligrama}

\aistatsaddress{ Boston University \And Boston University } ]

\begin{abstract}
Spectral clustering (SC) and graph-based semi-supervised learning (SSL) algorithms are sensitive to how graphs are constructed from data. In particular if the data has proximal and unbalanced clusters these algorithms can lead to poor performance on well-known graphs such as $k$-NN, full-RBF, $\epsilon$-graphs. This is because the objectives such as Ratio-Cut (RCut) or normalized cut (NCut) attempt to tradeoff cut values with cluster sizes, which are not tailored to unbalanced data.
We propose a novel graph partitioning framework, which parameterizes a family of graphs by adaptively modulating node degrees in a $k$-NN graph. We then propose a model selection scheme to choose sizable clusters which are separated by smallest cut values.
Our framework is able to adapt to varying levels of unbalancedness of data and can be naturally used for small cluster detection. We theoretically justify our ideas through limit cut analysis. Unsupervised and semi-supervised experiments on synthetic and real data sets demonstrate the superiority of our method.
\end{abstract}

\section{Introduction}\label{sec:intro_motiv}
Data with unbalanced clusters arises in many learning applications and has attracted much interest \citep{HeGarcia09}.
In this paper we focus on graph-based spectral methods for clustering and semi-supervised learning (SSL) tasks.
While model-based approaches \citep{Fraley02} may incorporate unbalancedness, they typically assume simple cluster shapes and need multiple restarts. In contrast non-parametric graph-based approaches do not have this issue and are able to capture complex shapes \citep{Ng01}. In spectral methods a graph representing data is first constructed. Then a graph-based learning algorithm such as spectral clustering(SC) \citep{Hagen92,Shi00} or SSL algorithms \citep{Zhu08,WanJebCha08} is applied on the graph. Of the two steps, graph construction has been identified to be important \citep{Luxburg07,Maier1,JebWanCha09}, and we will see is critical in the presence of unbalanced proximal clusters. Common graph construction methods include $\epsilon$-graph, fully-connected RBF-weighted(full-RBF) graph and $k$-nearest neighbor($k$-NN) graph. Of the three $k$-NN graphs appears to be most popular due to its relative robustness to outliers \citep{Zhu08,Luxburg07}. 

Drawbacks of spectral methods on unbalanced data have been documented: \citet{Zelnik04} suggests an adaptive RBF parameter for full-RBF graph. More recently, \citet{Nadler07} describe these drawbacks from a random walk perspective. Nevertheless, to the best of our knowledge, there does not exist systematic ways of adapting spectral methods to possibly unbalanced data.
There are other spectral methods \citep{Buhler09,ShiBelkinYu09} that are claimed to be able to handle unbalanced clusters better than standard SC. However, they do not look into unbalanced data specifically; meanwhile our framework can be combined with these methods.
Also related is size-constrained clustering \citep{ST97,Hoppner08,ZhuWangLi10} which imposes constraints on the number of points per cluster. This is a different problem because with size constraints the partitions may not be low-density cuts, while our clustering goal here is to find natural partitions separated by density valleys -- clusters could be unbalanced but we do not know a priori how unbalanced they are.


The poor performance of spectral methods in the presence of unbalanced clusters is a result of minimizing RatioCut (RCut) or normalized cut (NCut) objective on these graphs, which seeks a tradeoff between minimum cut-values and cluster sizes. While robust to outliers, this sometimes leads to meaningless cuts.
In Section 2 we illustrate some of the fundamental issues underlying poor performance of spectral methods on unbalanced data. We then describe a novel graph-based learning framework in Section 3.
Specifically we propose to parameterize a family of graphs by adaptively modulating node degrees based on the ranking of all samples.
This rank-modulated degree (RMD) strategy asymptotically results in reduced (increased) cut-values near density valleys (high-density areas).
Based on this parametric scheme we present a model selection step that finds the lowest-density partition with sizable clusters.
Our approach is able to handle varying levels of unbalanced data and detect small clusters.
We explore the theoretical basis in Section~4. In Section~5 we present experiments on synthetic and real datasets to show significant improvements in SC and SSL results over conventional graphs. Proofs appear in supplementary section.

\section{Problem Definition}\label{sec:motiv}
We describe an abstract continuous setting to describe our problem.
Assume that data is drawn from some unknown density $f(x)$, where $x \in \mathbb{R}^d$. For simplicity we consider binary clustering problems but our setup generalizes to arbitrary number of partitions. We seek a hypersurface $S$ that partitions $\mathbf{R}^d$ into two non-empty subsets $D$ and $\bar D$ (with $D \cup \bar D=\mathbf{R}^d$).

While there are many ways to formulate partitioning problems we formulate the goal of binary partitioning to find a hypersurface that passes through minimum density regions, namely,
\begin{equation} \label{e.optim}
S_0 = \mbox{arg}\min_{S} \int_S \psi(f(s)) ds
\end{equation}
where $\psi(\cdot)$ is some positive monotonic function. This goal is too simplistic that the resulting partitions could be empty. Consequently, we need to constrain the measures, $\min\{\mu(D),\mu(\bar D)\} \geq \delta$ for some $\delta > 0$, to ensure meaningful partitions, where $\mu(A) = Prob\{x \in A\}$.
Certainly the optimal hypersurface $S_0$ may not necessarily be balanced.
\begin{defn}
We say the data is $\alpha$-unbalanced if the hypersurface $S_0$ results in partitions, $(D_0,\bar D_0)$, with:
\[
    \min\{\mu(D_0),\mu(\bar D_0)\} = \alpha < 1/2.
\]
\end{defn}

We now focus on finite sample objective mirroring the continuous objective of Eq.(\ref{e.optim}).
Let $G=(V,E)$ be a graph constructed using $n$ samples in some manner consistent with the underlying topology of the ambient space. We denote by $S$ a cut that partitions $V$ into $C_S$ and $\bar C_S$. The cut-value associated with $S$ is:
\begin{eqnarray}\label{equ:cut}
Cut(C_S,\bar{C_S}) = \sum_{u\in C_S,v\in \bar{C_S},(u,v)\in E}w(u,v)
\end{eqnarray}
The empirical variant of Eq.(\ref{e.optim}) is to minimize the cut-value subject to sizable cluster constraints:
{\small
\begin{equation} \label{e.empopt}
S_* = \mbox{arg} \min_{S} \left \{ Cut(C_S,\bar C_S) \mid \min\{|C_S|,\,|\bar C_S|\} \geq \delta |V| \right \}
\end{equation}
}
\noindent
We assume that the cut $S_*$ results in $\left(C_*,\,\bar C_*\right)$.

\subsection{Graph Partitioning Algorithms}\label{subsec:alg}

Existing graph partitioning algorithms aim to minimize various objectives on the graph. The min-cut approach \citep{Stoer97} directly minimizes the cut-value Eq.(\ref{equ:cut}). While simple and efficient, this method could suffer from serious outlier problems without sizable cluster constraints. The popular SC algorithms attempt to minimize RCut or NCut:
\begin{eqnarray}\label{equ:ratiocut}
    RCut(C_S,\bar{C_S})=Cut(C_S,\bar{C_S}) \left( \frac{|V|}{|C_S|} + \frac{|V|}{|\bar{C_S}|} \right),
 \end{eqnarray}
$$
NCut(C_S,\bar{C_S})=Cut(C_S,\bar{C_S})\left(\frac{vol(V)}{vol(C_S)}+\frac{vol(V)}{vol(\bar{C_S})}\right),
$$
where $vol(C)=\sum_{u\in C,v\in V}w(u,v)$. Both NCut and RCut seek to trade-off low cut-values against cut size.
While robust to outliers, minimizing RCut(NCut) on traditional graphs can fail when data is unbalanced (i.e. with small $\alpha$ of Def.1).
To further motivate this issue, we first define two quantities: cut-ratio, $q$, and unbalancedness coefficient, $y$, associated with optimal cuts resulting from Eq.(\ref{e.empopt}) and any balanced cut $S_B$:
$$
q = {Cut(C_*,\bar{C_*}) \over Cut(C_B,\bar{C_B})};\,\,\,y= \frac{\min \{size(C_*),\,size(\bar{C_*})\}}{size(C_*)+size(\bar{C_*})},
$$
where $size(C)=|C|$ for RCut and $size(C)=vol(C)$ for NCut. The cut $S_B$ is associated with $\left(C_B,\,\bar C_B\right)$ and is balanced, i.e., $size(C_B)=size(\bar C_B)$. Note that $y \in [0,0.5]$ is an empirical measure of unbalancedness of the optimal cut $S_*$, while $q \in [0,1]$ is the proportion of cut-value at a ``density valley'' to that of a balanced cut.
Next we characterize the necessary condition for minimizing RCut(NCut) to work correctly.
\begin{prop}
SC fails, i.e., the RCut/NCut values of balanced cuts $S_B$ are smaller than that of $S_*$ (obtained in Eq.(\ref{e.empopt})), whenever $q > 4y(1-y)$.
\end{prop}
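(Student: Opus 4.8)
The plan is to reduce the statement to a one–line algebraic inequality by rewriting the RCut/NCut objective in a product form. First I would use the elementary identity $size(C_S)+size(\bar C_S)=size(V)$ (with $size(\cdot)=|\cdot|$ for RCut and $size(\cdot)=vol(\cdot)$ for NCut, noting that $vol(V)$ does not depend on the partition) to collapse both objectives to
\[
RCut(C_S,\bar C_S)\;=\;Cut(C_S,\bar C_S)\cdot\frac{size(V)^2}{size(C_S)\,size(\bar C_S)},
\]
and likewise for NCut. This single structural fact is all that is really needed; everything afterward is substitution.

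Next I would evaluate this expression at the two cuts being compared. For the balanced cut $S_B$ we have $size(C_B)=size(\bar C_B)=size(V)/2$, so the multiplicative factor is exactly $4$ and $RCut(C_B,\bar C_B)=4\,Cut(C_B,\bar C_B)$. For the optimal cut $S_*$, take without loss of generality $size(C_*)\le size(\bar C_*)$, so that by the definition of $y$ we have $size(C_*)=y\,size(V)$ and $size(\bar C_*)=(1-y)\,size(V)$; the factor becomes $1/\bigl(y(1-y)\bigr)$, giving $RCut(C_*,\bar C_*)=Cut(C_*,\bar C_*)\big/\bigl(y(1-y)\bigr)$. Then I substitute the definition of the cut-ratio, $Cut(C_*,\bar C_*)=q\,Cut(C_B,\bar C_B)$, to eliminate the remaining cut-value.

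Finally I would assemble the comparison. SC ``fails'' precisely when $RCut(C_B,\bar C_B)<RCut(C_*,\bar C_*)$, i.e.
\[
4\,Cut(C_B,\bar C_B)\;<\;\frac{q}{y(1-y)}\,Cut(C_B,\bar C_B),
\]
and dividing by $Cut(C_B,\bar C_B)>0$ (assuming the balanced cut is nondegenerate, which is implicit in its use) yields $q>4y(1-y)$; in fact the derivation is an equivalence, not merely an implication. The identical chain of equalities works verbatim for NCut with $size=vol$, since $vol(V)$ is partition-independent, which is exactly what makes $q$ the only data-dependent quantity left in the inequality. There is no genuinely hard step here — the only points requiring care are (i) stating the one identity that unifies the RCut and NCut cases, (ii) the WLOG choice that places the smaller side in the numerator so the definition of $y$ applies cleanly, and (iii) observing that $size(V)$ is the same constant for both partitions under comparison. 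As a sanity check one can note that $4y(1-y)\le 1$ with equality iff $y=1/2$, while $q\le 1$ always, so when the data is balanced SC never fails, consistent with intuition.
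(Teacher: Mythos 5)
Your argument is correct and is exactly the ``direct substitution'' the paper invokes: rewriting $RCut$ (resp.\ $NCut$) as $Cut\cdot size(V)^2/\bigl(size(C)\,size(\bar C)\bigr)$, evaluating the factor as $4$ at the balanced cut and $1/\bigl(y(1-y)\bigr)$ at $S_*$, and substituting $q$ gives precisely the condition $q>4y(1-y)$. No gaps; your observation that the comparison is in fact an equivalence is a mild (and correct) sharpening of the stated proposition.
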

The proof follows by direct substitution.
Prop.1 suggests (see Fig.\ref{fig:qy}) that if the unbalancedness, $y$, is sufficiently small, say, $0.15$, then the cut value at the ``density valley'' has to be more than twice as deep for RCut/NCut to be effective.
\begin{figure}[tb]
\begin{centering}
\begin{minipage}[t]{.38\textwidth}
\includegraphics[width = 1\textwidth]{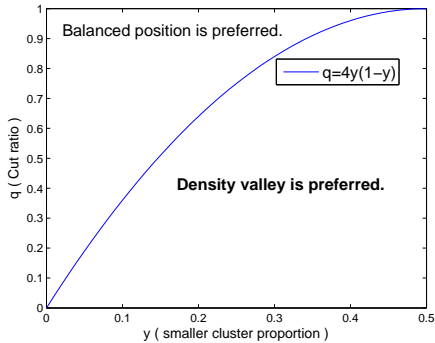}
\end{minipage}
\vspace{-0.1in}
\caption{\small Cut-ratio ($q$) vs unbalancedness ($y$). RCut value is smaller for balanced cuts than unbalanced low-density cuts whenever the cut-ratio is above the curve.}
\label{fig:qy}
\end{centering}
\vspace{-0.1in}
\end{figure}

Examining the limit behavior for $k$-NN, $\epsilon$-graph and full-RBF graphs \citep{Maier1,Narayanan06} is instructive to understand the pair $(q,y)$. For properly chosen $k_n$, $\sigma_n$ and $\epsilon_n$ respectively, as the number of samples $n\rightarrow \infty$, $q$ and $y$ converge with high probability to:

\vspace{-0.15in}
{\small
\begin{equation} \label{e.limval}
q \rightarrow {\int_{S_0} f^{\gamma}(x)dx \over \int_{S_B} f^{\gamma}(x)dx},\,\,\,\,\, y \rightarrow \min \{\mu(D_0),\,\mu(\bar D_0)\} = \alpha
\end{equation}}
\noindent
where $\gamma$ is a graph-dependent constant\footnote{For $k$-NN $\gamma < 1$ and $\gamma \in [1,2]$ for $\epsilon$ and full-RBF graphs.}. $S_0$ is the solution to Eq.(\ref{e.optim}) resulting in $\left(D_0,\bar D_0\right)$.
Essentially, each graph construction corresponds to a point $(q, y)$ on Fig.1. The issue with traditional graphs is that with $q$ unchanged, the data could be so unbalanced that $y$ falls above the curve on Fig.1, leading RCut(NCut) to pick the balanced cut!

\begin{figure*}[!tb]
\begin{centering}
\begin{minipage}[t]{.32\textwidth}
\includegraphics[width = 1\textwidth]{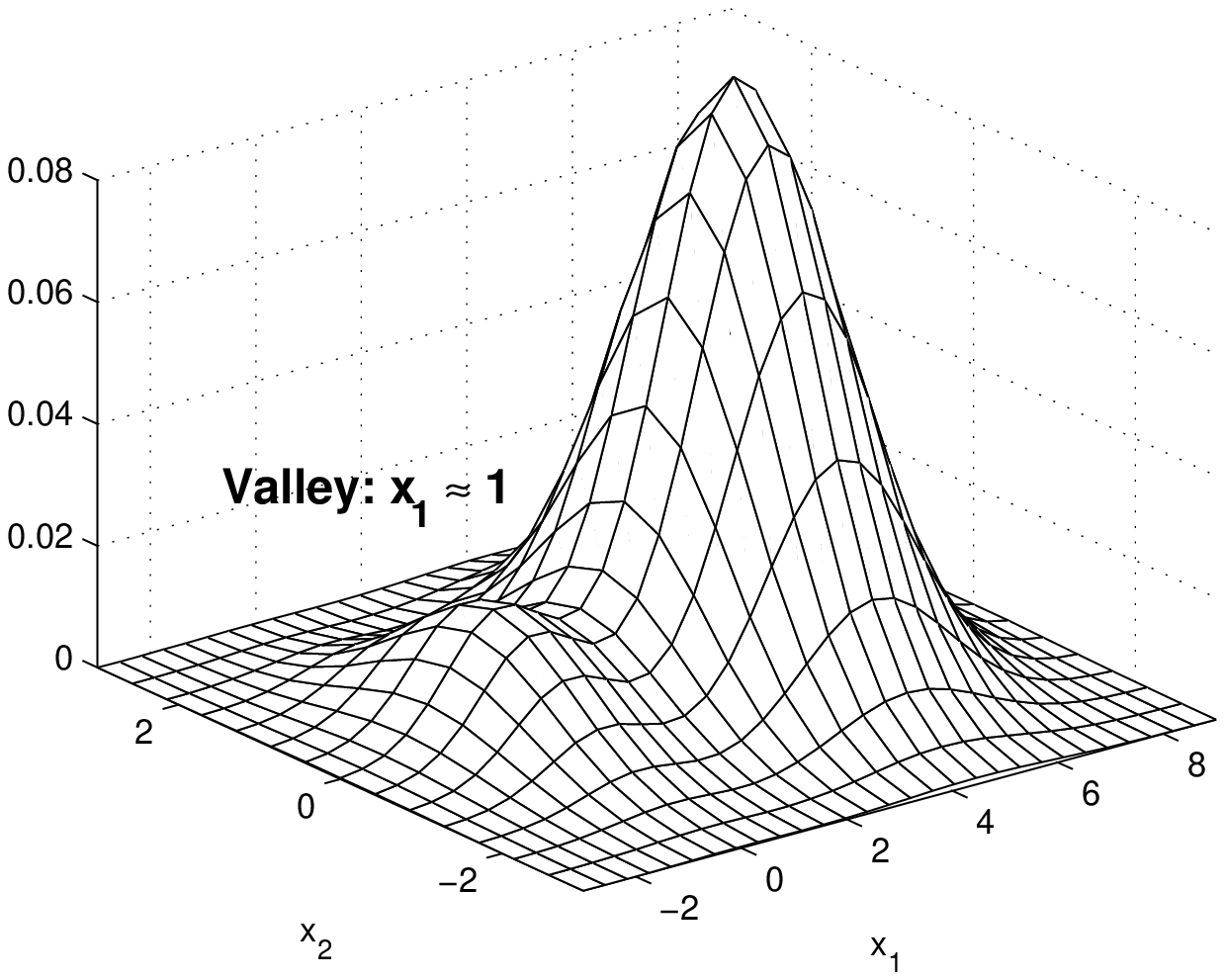}
\makebox[5.5 cm]{\small (a) pdf}
\end{minipage}
\begin{minipage}[t]{.32\textwidth}
\includegraphics[width = 1\textwidth]{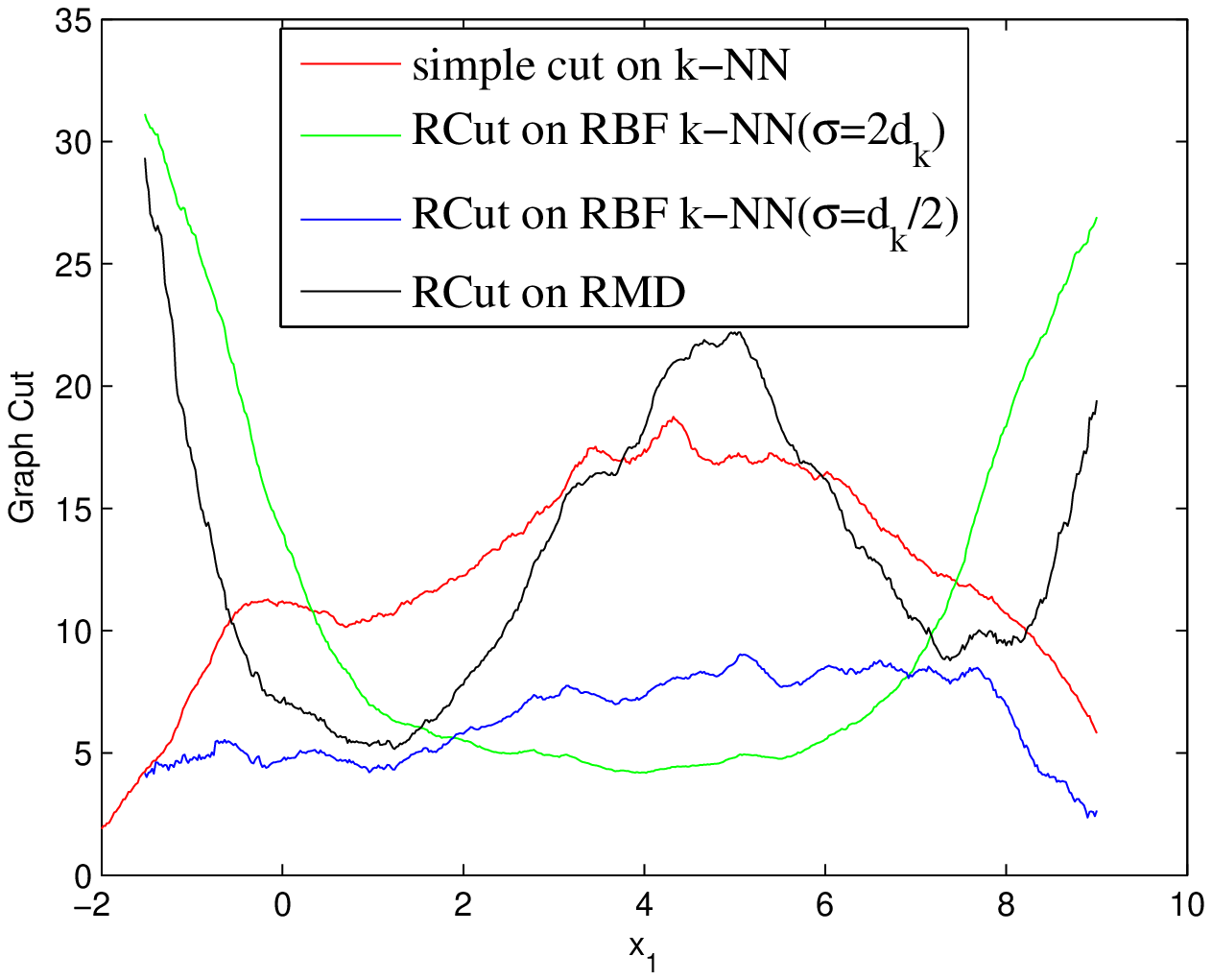}
\makebox[5.5 cm]{\small (b) RCut of $k$-NN and RMD }
\end{minipage}
\begin{minipage}[t]{.32\textwidth}
\includegraphics[width = 1\textwidth]{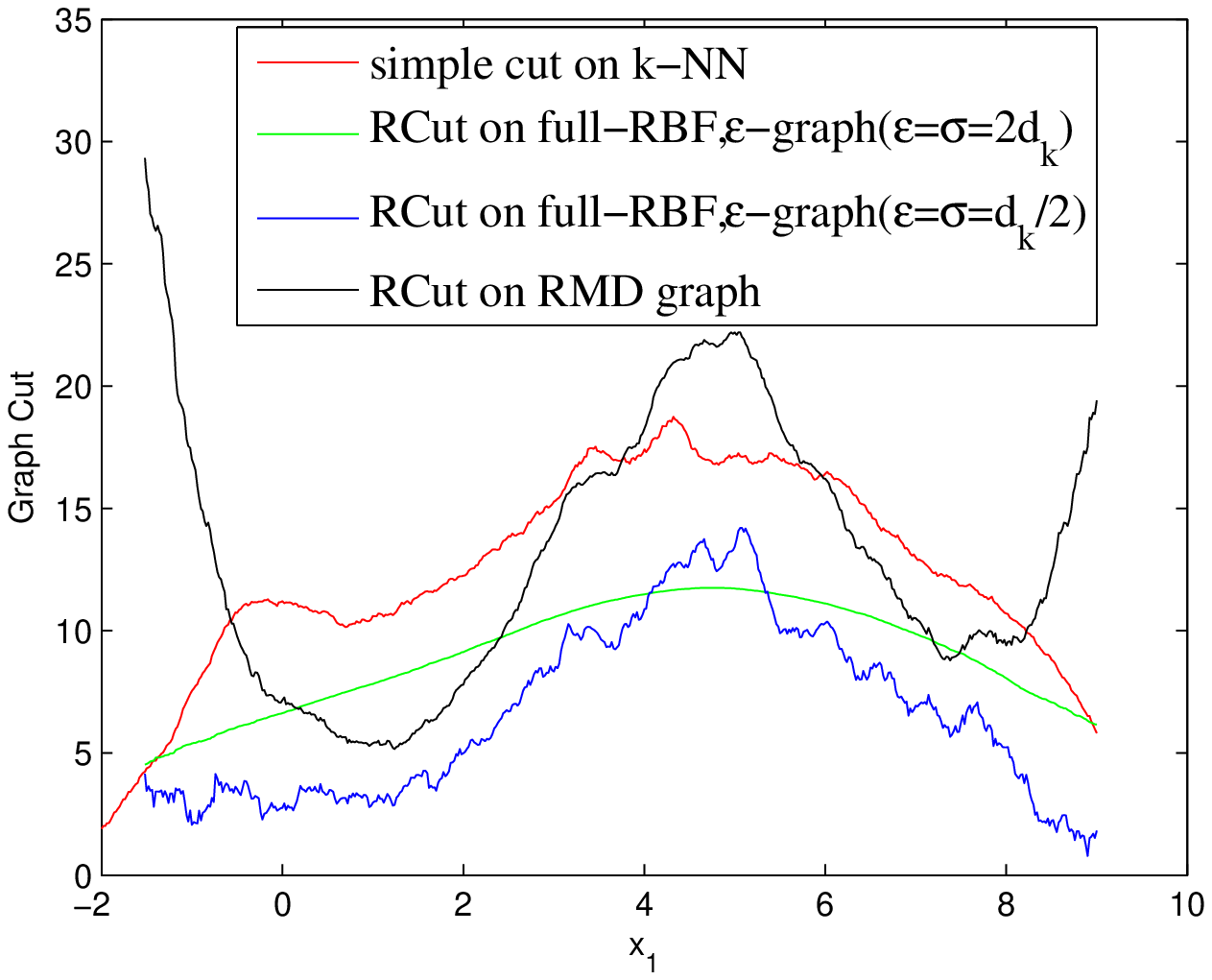}
\makebox[5.5 cm]{\small (c) RCut of full-RBF, $\epsilon$ and RMD }
\end{minipage}
\begin{minipage}[t]{.32\textwidth}
\includegraphics[width = 1\textwidth]{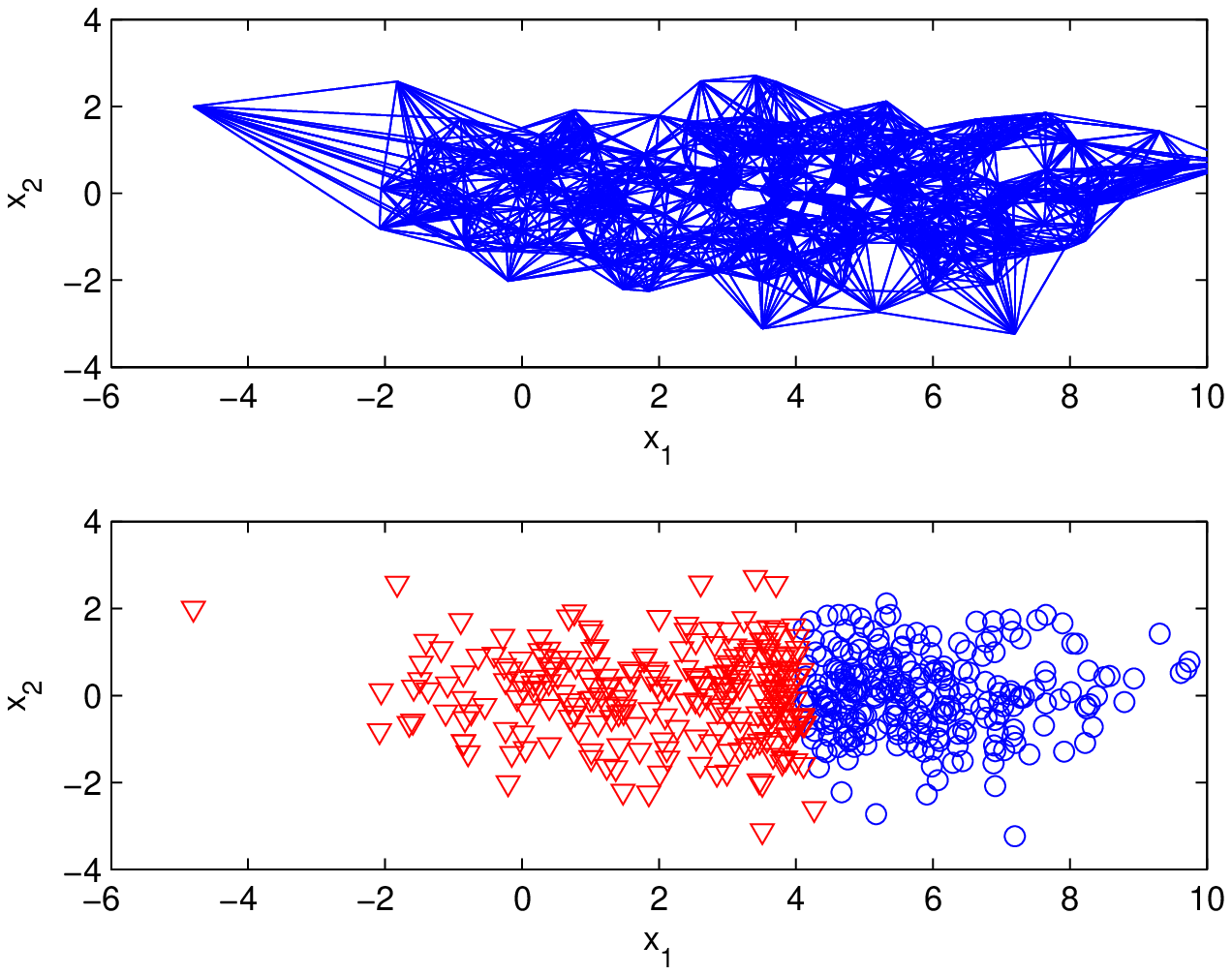}
\makebox[5.5 cm]{\small (d) $k$-NN}
\end{minipage}
\begin{minipage}[t]{.32\textwidth}
\includegraphics[width = 1\textwidth]{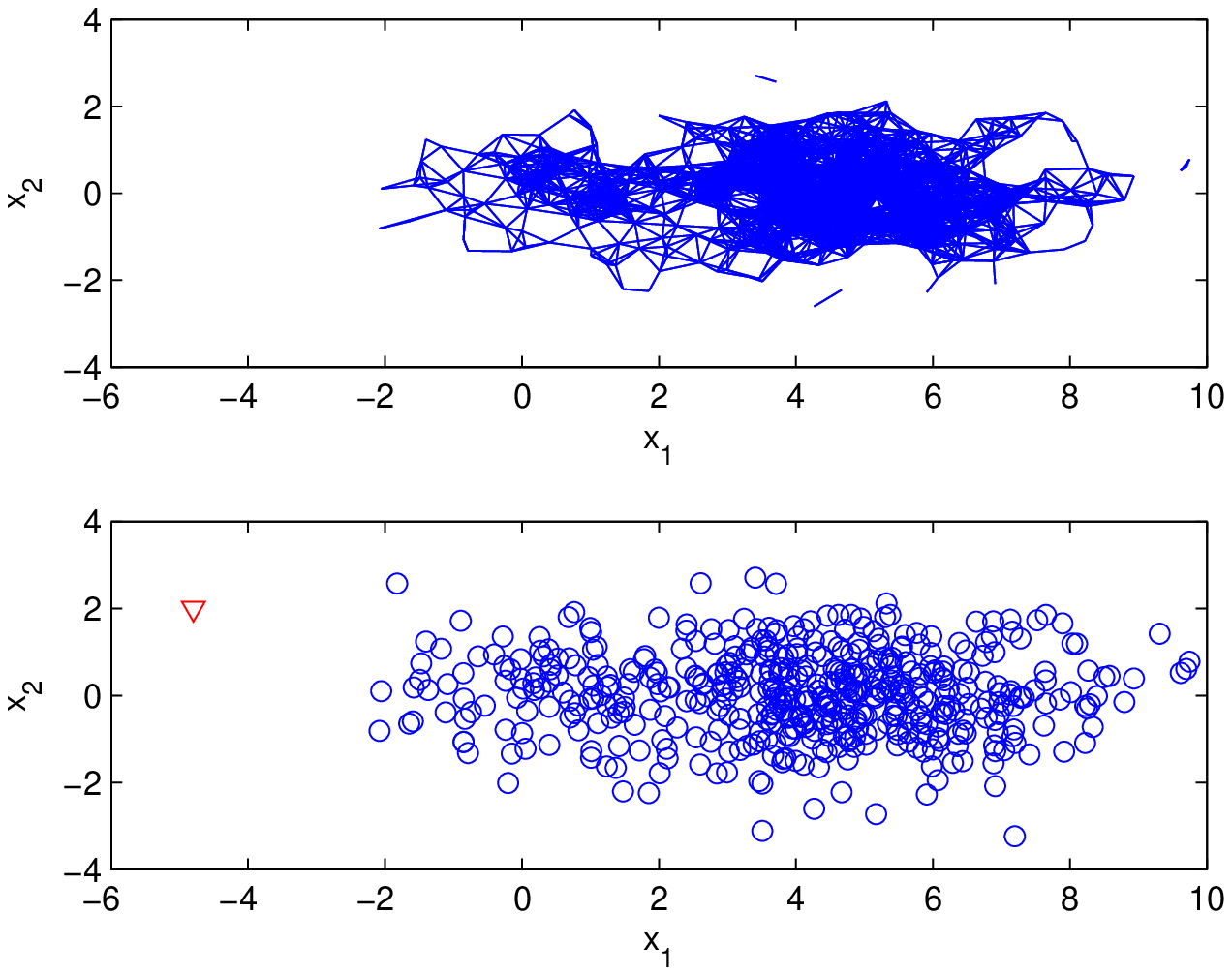}
\makebox[5.5 cm]{\small (e) full-RBF and $\epsilon$-graph}
\end{minipage}
\begin{minipage}[t]{.32\textwidth}
\includegraphics[width = 1\textwidth]{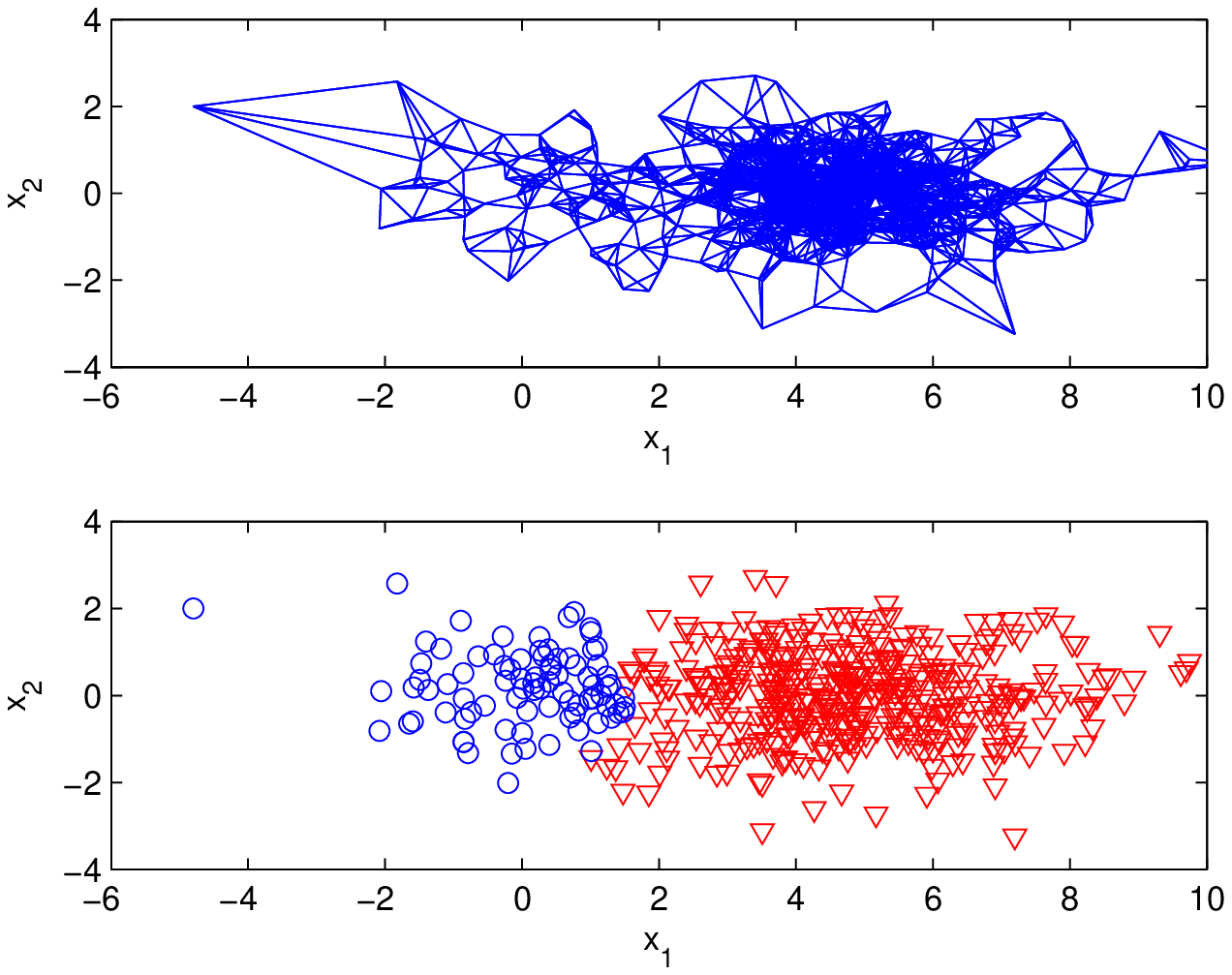}
\makebox[5.5 cm]{\small (f) RMD(our method)}
\end{minipage}
\caption{\small Mixture of two Gaussians with mixture proportions $0.85$ and $0.15$. The corresponding mean vectors are respectively $[4.5;0],\,\,[0;0]$,  and the covariances are $diag(2,1), \,\,\,diag(1,1)$. Cut and RCut values for $k$-NN, $\epsilon$ and full-RBF and our graph(RMD) are plotted. Figures in (b),(c) are averaged over 20 Monte Carlo runs. The values are re-scaled for demonstration. $\sigma$ is the RBF parameter. $d_k$ is the average $k$-NN distance. The number of samples is $n=1000$, and $k=30$. For (f) unweighted RMD graph with $l=30, \lambda=0.4$; for (d) unweighted $k$-NN; for (e) $\epsilon=\sigma=d_k$. The example (b) shows that large $\sigma$ in the $k$-NN graph results in smoothing of cut-values and the minimum RCut is not at the density valley. (c),(e) show that smaller $\epsilon,\sigma$ have pronounced sensitivity to outliers (RCut curve goes down near boundaries), while large $\epsilon,\sigma$ smoothen the RCut value.}
\label{fig:2g_graph}
\end{centering}
\end{figure*}

\subsection{Our Algorithm}\label{subsec:sol}
In order to adapt RCut(NCut) based algorithms to unbalanced data, our main point is that the ``balancing term"  is difficult to manipulate since it corresponds to the number of samples. On the other hand, the cut-ratio $q$ can be controlled by \emph{adaptively parameterizing different graphs on the same node set}.

To motivate this idea consider binary partitioning on a weighted graph $G_0 = (V,E_0,W_0)$. The node set $V$ is associated with samples, while $E_0, W_0$ are obtained through $k$-NN, full-RBF or $\epsilon$-graph or a combination thereof. The important point for future reference is that the graph $G_0=(V,E_0,W_0)$ is fixed. We seek a cut $(C, \bar C)$ for this graph that satisfies Eq.(\ref{e.empopt}). Conventional methods rely on a number of different graph partitioning techniques such as RCut/NCut based SC to obtain partitions. We have argued that this can lead to skewing the cut towards balanced cuts that are not representative of actual clusters.

To account for unbalancedness we adopt a new strategy here. The idea is to parameterize a family of graphs over a parametric space, $\lambda \in \Lambda$, with different edge sets on the same node set.
$$G(\lambda) = (V, E(\lambda),W(\lambda)),\,\, \lambda \in \Lambda$$
We will see that the mapping $E(\lambda),W(\lambda)$ allows for asymmetrical emphasis between low vs. high density for different choices of parameters. A number of graph partitioning techniques such as RCut/NCut based SC can now be applied on graphs with different choices of $\lambda \in \Lambda$ to obtain different partitions.
We can thus obtain a mapping from $\lambda \in \Lambda$ to a partition $\left(C(\lambda), \bar C(\lambda)\right)$,
$$
\lambda \longrightarrow (C(\lambda), \bar C(\lambda))
$$
Then we can evaluate the cut value of these partitions on the reference graph $G_0$, namely,
$$
Cut_0(C(\lambda),\bar C(\lambda)) = \sum_{u \in C(\lambda),\,v \in \bar C(\lambda)} w_0(u,v)  \mathbf{1}_{uv \in E_0}
$$
We can then pick the $\lambda$ (the partition) that minimizes the cut value under the constraint that each cluster has at least a $\delta$ fraction of the samples,
{\small
\begin{equation} \label{e.empopt1}
\lambda_* = \mbox{arg} \min_{\lambda \in \Lambda} \left \{ Cut_0(C(\lambda),\bar C(\lambda)) \mid \min\{|C(\lambda)|,\,|\bar C(\lambda)|\} \geq \delta |V| \right \}
\end{equation}
}
and output $\left(C(\lambda_*),\bar C(\lambda_*)\right)$ as the optimal partition. Notice our framework exactly aims at the optimal criterion Eq.\ref{e.empopt}).

This motivates how to parameterize a family of graphs to obtain rich binary partitioning structures: \\
({\bf 1}) Adaptively modulate the degree $k=k(x)$ node-wise based on $k$-NN graph, \\
({\bf 2}) the neighborhood size $\epsilon=\epsilon(x)$ based on $\epsilon$-graph.\\
Both strategies are somewhat equivalent. We adopt the first scheme since it is easier to explicitly control the number of edges to ensure a connected graph. Specifically, we propose to modulate node degrees of a $k$-NN graph through a parametric way based on rankings of all samples. This rank indicates whether a node lies near low/high density areas; therefore degree modulation can lead to fewer/more edges at low/high density regions.
Consequently, for the same $y$ with node set fixed, the cut-ratio $q$ is directly reduced, pulling down the point $(q,y)$ on Fig.1, for RCut(NCut) based algorithms to seek density valley cuts.


We propose a novel graph partitioning framework involving the following steps:\\
\noindent
(a) Parameterize a family of graphs with different edge sets on the same node set;\\
\noindent
(b) Minimize RCut(NCut) on this family of graphs to get a family of partitions;\\
\noindent
(c) Select the best partition that solves Eq.(\ref{e.empopt}).




\noindent
{\bf Remark:}
Note that one could also parameterize a family of $k$-NN, full-RBF or $\epsilon$-graphs with $k,\epsilon,\sigma$ on the same node set. This parameterization is obviously is not node-wise adaptive, which is critical for our problem. 
We present an example to demonstrate this point. Fig.\ref{fig:2g_graph}(a) shows an unbalanced proximal density, with a ``shallow'' valley in the cut-value curves(red) in (b),(c). RCut curves on ``parameterized'' traditional graphs and our RMD graph are plotted in (b),(c). Note that large values of $k$, $\epsilon$ and $\sigma$ tend to smooth the curve (sometimes even lose the valley) and increase $q$, which worsens the problem. In contrast reducing $k$, $\epsilon$ and $\sigma$ below well-understood thresholds leads to zigzag curves, disconnected graphs and sensitivity to outliers.
Basically, increasing/reducing $k$,$\epsilon$ or $\sigma$ results in uniformly larger/smaller cut-values for all nodes, leading to poor control of $q$.
On the contrary, our rank-modulated degree (RMD) scheme results in fewer/more edges for nodes in low/high density areas, directly reducing the cut-ratio $q$. RCut minima on the RMD graph (black) tends to be near valleys as seen in Fig.\ref{fig:2g_graph}(b),(c). In addition RMD graph also inherits from $k$-NN the advantage of being robust to outliers, as the RCut curve increases near boundaries.
%
\section{RMD Graphs: Main Steps}\label{sec:RMD_idea}
%
Our RMD graph-based learning framework has the following steps:
%
%

\noindent
{\bf (1) Rank Computation:}
The rank $R(x)$ of every point $x$ is calculated:
\begin{eqnarray}\label{eq:grank}
  R(x) = \frac{1}{n}\sum_{i=1}^n\mathbb{I}_{\{G(x)\leq G(x_i)\}}
\end{eqnarray}
where $\mathbb{I}$ denotes the indicator function. Ideally we would like to choose $G(\cdot)$ to be the underlying density, $f(\cdot)$ of the data. Since $f$ is unknown, we need to employ some surrogate statistic. While many choices are possible, the statistic in this paper is based on nearest-neighbor distances. Such statistics have been employed for high-dimensional anomaly detection \citep{Zhao09,Zhao12}. Details are described in Sec.\ref{subsec:rank}. The rank is a normalized ordering of all points based on $G$, ranges in $[0,1]$, and indicates how extreme $x$ is among all points.

\noindent
{\bf (2) Parameterized RMD Graphs Construction:}
Build RMD graphs by connecting each point $x$ to its deg($x$) closest neighbors. The degree deg($x$) for node $x$ is modulated as follows:
\begin{eqnarray}\label{eq:degree}
  deg(x) = k(\lambda+2(1-\lambda)R(x)),
\end{eqnarray}
where $\lambda \in (0,1]$ parameterizes the family of RMD graphs.
$k$ is the average degree. We discretize $\lambda$ in $\{0.2,0.4,0.6,0.8,1\}$ in experiments. It is not difficult to see that $R(x)$ converges (in distribution) to a uniform measure on the unit interval regardless of the underlying density $f(\cdot)$. This implies the average degree across all samples is $k$.
The minimum degree $\lambda k$ can be used to ensure a connected graph when necessary.
Note that we also vary $k$,$\sigma$ in our experiments for a more thorough demonstration.

\noindent
{\bf (3) Graph-based Learning:}
Apply graph-based clustering or SSL algorithms on the family of RMD graphs to obtain a family of partitions. RCut/NCut based SC algorithms are well established. We use both objectives in Sec.\ref{sec:experiment}, but mainly focus on NCut since it has better performance and is recommended. For SSL tasks we employ RCut-based Gaussian Random Fields(GRF) and NCut-based Graph Transduction via Alternating Minimization(GTAM). These approaches all involve minimizing $Tr(F^TLF)$ plus some constraints or penalties, where $L$ is the graph Laplacian, $F$ the cluster indicator or classification function. This is related to RCut(NCut) minimization \citep{Chung96}. We refer readers to references \citep{Zhu08,WanJebCha08,Luxburg07} for details.

\noindent
{\bf (4) Min-Cut Model Selection:}
The final step is to select the min-cut partition that is meaningful according to Eq.(\ref{e.empopt}).
Our main assumption is we have prior knowledge that the smallest cluster is at least of size $\delta n$.
The $K$-partitions obtained from step (3) are now parameterized: $\left(C_1(\lambda,k,\sigma),...,C_K(\lambda,k,\sigma)\right)$.
We pick the partition with minimum Cut value (lowest density valley) over all admissible choices:
\begin{eqnarray}\label{eq:selection}
  & \min_{\lambda,k,\sigma}\{Cut_0\left(C_1,...,C_K\right)=\sum^K_{i=1}Cut_0(C_i,\bar{C}_i)\} \\
\nonumber
  & s.t. ~~\min\{|C_1(\lambda,k,\sigma)|,...,|C_K(\lambda,k,\sigma)|\}\geq \delta n
\end{eqnarray}
Partitions with smaller clusters than $\delta n$ will be discarded. $Cut_0\left(\cdot\right)$ represents the Cut values of different partitions are evaluated on a same reference $k_0$-NN graph to pick the min-cut partition.
{\bf This step exactly aims at the optimal criterion of Eq.(\ref{e.empopt})}.
Note that whatever RCut/NCut is used, for the above size constraint we just consider the number of points within the clusters.


\noindent
\begin{tabular}{lll}
  \hline
  \noindent\textbf{Algorithm 1: RMD Graph-based Learning:} \\
{\bf Input}: $n$ data samples $\{x_1,\ldots,x_n\}$ (partially\\
labeled for SSL), number of clusters/classes $K$, \\
smallest cluster/class size threshold $\delta$. \\
{\bf Steps}:\\
  1. Compute ranks of samples based on Eq.(\ref{eq:grank}). \\
  2. For different $\lambda,k,\sigma$, do: \\
  \indent a. Construct the RMD graph based on Eq.(\ref{eq:degree}); \\
  \indent b. Apply graph-based learning algorithms on the\\
  \indent \indent current RMD graph to get $K$ clusters. \\
  3. Compute Cut values of different partitions from \\
  step 2 on the $k_0$-NN graph. Pick the partition with \\
  the smallest Cut value based on Eq.(\ref{eq:selection}).\\
{\bf Output}: the selected $K$-partition. \\
  \hline
\end{tabular}

\noindent
{\bf Remark:}
Our framework improves the graph construction step, augments with a model selection step with desired optimal criterion, but does not change graph-based learning algorithms. This implies that our framework can be combined with other graph partitioning algorithms to improve performance for unbalanced data, such as ratio/normalized Cheeger cut \citep{Buhler09}.


\subsection{Rank Computation}\label{subsec:rank}
We now specify the statistic $G$ in rank computation. We choose the statistic $G$ in Eq.(\ref{eq:grank}) based on nearest-neighbor distances.
\begin{equation}\label{equ:G(x)}
G(x)=\frac{1}{l}\sum^{2l}_{i=l+1}D_{(i)}(x)
\end{equation}
where $D_{(i)}(x)$ denotes the distance from $x$ to its $i$-th nearest neighbor, and $G$ is the average of $x$'s $(l+1)$-th to $2l$-th nearest neighbor distances. Other choices for $G$ are possible. (1) $G(x)$ is the number of neighbors within an $\epsilon$-ball of $x$ or  (2) $G(x)$ is the distance from $x$ to its $l$-th nearest neighbor. Empirically (and theoretically) we have observed that Eq.(\ref{equ:G(x)}) leads to better performance and robustness. The ranks are relative orderings of points and are quite insensitive to the choice of the neighborhood size parameter $l$. To further reduce variance in rank computation we also employ a U-statistic technique \citep{Korolyuk94} with $B$ times of resampling.


\section{Analysis}\label{sec:thm}
%
Our asymptotic analysis show how graph sparsification leads to control of cut-ratio $q$ introduced in Sec.~2. Detailed proofs can be found in supplementary section. Assume the data set $\{x_1,\ldots,x_n\}$  is drawn i.i.d. from density $f$ in $\mathbb{R}^d$. $f$ has a compact support $C$. Let $G=(V,E)$ be the RMD graph. Given a separating hyperplane $S$, denote $C^+$,$C^-$ as two subsets of $C$ split by $S$, $\eta_d$ the volume of unit ball in $\mathbb{R}^d$.

First we show the asymptotic consistency of the rank $R(y)$ at some point $y$. The limit of $R(y)$, $p(y)$, is the complement of the volume of the level set containing $y$. Note that $p$ exactly follows the shape of $f$, and always ranges in $[0,1]$ no matter how $f$ scales.
\begin{thm}\label{rank-pvalue}
If $f(x)$ satisfies some regularity conditions, then as $n\rightarrow\infty$, we have
\begin{equation}
    R(y)\rightarrow p(y):= \int_{\left\{x:f(x)\leq
f(y)\right\}}f(x)dx.
\end{equation}
\end{thm}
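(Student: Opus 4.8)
## Proof Strategy for Theorem~\ref{rank-pvalue}

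The plan is to show that the empirical rank $R(y) = \frac{1}{n}\sum_{i=1}^n \mathbb{I}_{\{G(y) \leq G(x_i)\}}$ concentrates around its "population" analogue, and then to identify that population quantity with $p(y)$. I would split the argument into two parts: (a) a concentration step showing that $R(y)$ is close to $\mu\{x : G(y) \leq G^*(x)\}$ for an appropriate limiting version $G^*$ of the statistic $G$, and (b) an identification step showing that this probability equals $\int_{\{x: f(x) \leq f(y)\}} f(x)\,dx$.

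For step (a), I would first establish the asymptotic behavior of the $k$-NN-distance statistic $G(x) = \frac{1}{l}\sum_{i=l+1}^{2l} D_{(i)}(x)$. The classical fact is that for a point $x$ with $f(x) > 0$, the $m$-th nearest-neighbor distance $D_{(m)}(x)$ satisfies $n \eta_d D_{(m)}(x)^d f(x) \to m$ in probability (or more precisely, $D_{(m)}(x) \approx (m/(n\eta_d f(x)))^{1/d}$), so that $G(x)$ behaves like a decreasing function of $f(x)$ — larger density yields smaller neighbor distances. Hence the event $\{G(y) \leq G(x_i)\}$ asymptotically matches the event $\{f(x_i) \leq f(y)\}$. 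Making this rigorous requires: (i) uniform control of the $k$-NN distance approximation over the support (using the regularity conditions on $f$ — bounded away from zero on its support, Lipschitz or Hölder density, smooth enough boundary), and (ii) a standard law-of-large-numbers / Hoeffding-type bound for the sum of indicators, conditioned on the behavior of $G$ evaluated at the sample points. Since $l$ is held fixed (or grows slowly) while $n \to \infty$, one must be careful that the fluctuation of $G(x_i)$ around its deterministic proxy is small enough not to flip the indicator except on a set of points near the level set boundary $\{f(x) = f(y)\}$, which has vanishing measure under the regularity assumptions.

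For step (b), once $\{G(y) \leq G(x_i)\}$ is replaced by $\{f(x_i) \leq f(y)\}$ up to negligible error, the empirical average $\frac{1}{n}\sum_i \mathbb{I}_{\{f(x_i) \leq f(y)\}}$ converges by the law of large numbers to $\mu\{x : f(x) \leq f(y)\} = \int_{\{x: f(x)\leq f(y)\}} f(x)\,dx = p(y)$, which is exactly the claimed limit. The U-statistic resampling mentioned in Section~\ref{subsec:rank} only reduces variance and does not change the limit, so it can be ignored for the consistency statement.

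The main obstacle is step (a)(i): controlling the $k$-NN distance statistic \emph{uniformly} and with error bounds fine enough to handle points $x_i$ whose density $f(x_i)$ is close to $f(y)$. Near the level set $\{f = f(y)\}$ the indicator is unstable, and one needs an assumption (a margin/regularity condition on how fast $\mu\{x : |f(x) - f(y)| \leq t\}$ shrinks as $t \to 0$) to ensure the measure of the "ambiguous" region is $o(1)$. Boundary effects — points $x_i$ near $\partial C$ where nearest-neighbor distances are distorted because the ball $B(x_i, r)$ pokes outside the support — are a secondary technical nuisance handled by the compact-support and smooth-boundary assumptions folded into "regularity conditions." Everything else is a routine concentration argument.
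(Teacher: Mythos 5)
Your overall strategy --- show that the nearest-neighbor statistic $G$ is asymptotically a decreasing function of the density, replace the events $\{G(y)\le G(x_i)\}$ by $\{f(x_i)\le f(y)\}$ except for points near the level set $\{f=f(y)\}$, control that exceptional set through a margin condition, and finish with a concentration bound --- is exactly the skeleton of the paper's argument: the paper's ``flat regions disallowed'' assumption $\mathcal{P}\{y:|f(y)-f(x)|<\sigma\}\le M\sigma$ is precisely the margin condition you ask for, and its auxiliary lemma showing $\mathbb{E}D_{(l)}(x)\approx\bigl(l/(m\,c_d f(x))\bigr)^{1/d}$ is your step (a)(i). However, two technical points in your sketch do not hold as written. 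First, $l$ cannot be held fixed: $G(x)$ averages only $l$ nearest-neighbor distances, so its relative fluctuation around the deterministic proxy is of order $1/\sqrt{l}$ and does not vanish as $n\to\infty$. With fixed $l$ the indicator $\mathbb{I}_{\{G(y)\le G(x_i)\}}$ disagrees with $\mathbb{I}_{\{f(x_i)\le f(y)\}}$ with non-vanishing probability even for points whose density is bounded away from $f(y)$, and the limit would be a smoothed version of $p(y)$ rather than $p(y)$ itself. The paper takes $l=m_2^{\alpha}$ with $\frac{d+4}{2d+4}<\alpha<1$, i.e.\ $l$ growing polynomially, precisely so that the McDiarmid-type bound $\exp\bigl(-c\,(\mathbb{E}F_x)^2 l^2/m_2\bigr)$ beats the $(l/m_2)^{1/d}$ accuracy of the density proxy.

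Second, the indicators in $R(y)=\frac1n\sum_i\mathbb{I}_{\{G(y)\le G(x_i)\}}$ are not independent --- every $G(x_i)$ is computed from the same sample --- so ``a standard Hoeffding-type bound for the sum of indicators'' does not apply directly. The paper sidesteps this by restructuring the rank via sample splitting: the data are divided as $D_0\cup D_1\cup\cdots\cup D_{m_1}$, the statistic in the $j$-th comparison is computed only from $D_j$, so the summands $Y_j=\mathbb{I}_{\{G(x_j;D_j)>G(u;D_j)\}}$ are i.i.d.\ and Hoeffding yields concentration of $R(u)$ at its mean; the mean is then identified with $p(u)$ by a conditional McDiarmid argument on $F_x=G(x)-G(u)$ combined with the proxy bound and the split of the support into the far-from-level-set and near-level-set regions. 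If you insist on the unsplit statistic, you need a substitute --- McDiarmid applied to the rank as a function of all $n$ points, or genuinely uniform (VC-type) control of the nearest-neighbor distances --- and the uniform-control route you propose is substantially more delicate than the pointwise-in-expectation argument the paper uses. With these two repairs (growing $l$ and an argument handling the dependence), your plan goes through and is essentially the paper's proof.
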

\noindent

{\bf Remark:}\\
(1) The value of $R(x)$ is a direct indicator of whether $x$ lies in high/low density regions(Fig.\ref{fig:rwcont}). \\
(2) $R(x)$ is the integral of pdf asymptotically. It's smooth and uniformly distributed in $[0,1]$. This makes it appropriate to modulate the degrees with control of minimum, maximal and average degree.
\vspace*{-0.15in}
\begin{figure}[h]
\begin{centering}
\begin{minipage}[t]{.38\textwidth}
\includegraphics[width = 1\textwidth]{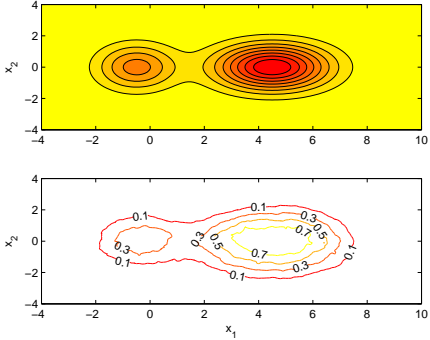}
\end{minipage}
\vspace*{-0.1in}
\caption{\small Density level sets \& rank estimates for unbalanced and proximal gaussian mixtures. High/low ranks correspond to high/low density levels.}
\label{fig:rwcont}
\end{centering}
\end{figure}

%
Next we study RCut(NCut) induced on unweighted RMD graph. The limit cut expression on RMD graph involves an additional adjustable term which varies point-wise according to the density. For technical simplicity, we assume RMD graph ideally connects each point $x$ to its deg$(x)$ closest neighbors.

\begin{thm}\label{part2}
Suppose some smoothness assumptions hold and $S$ be a fixed hyperplane in $\mathbb{R}^d$. For unweighted RMD graph, set the degrees of points according to Eq.(\ref{eq:degree}), where $\lambda\in(0,1)$ is a constant. Let $\rho(x)=\lambda+2(1-\lambda)p(x)$. Assume $k_n/n\rightarrow{0}$. In case $d$=1, assume $k_n/\sqrt{n}\rightarrow\infty$; in case $d\geq$2 assume $k_n/\log{n}\rightarrow\infty$. Then as $n\rightarrow\infty$ we have that:
\begin{equation} \label{eq:rcut}
    \frac{1}{k_n}\sqrt[d]{\frac{n}{k_n}}RCut_n(S)\longrightarrow  C_d B_S \int_S{f^{1-\frac{1}{d}}(s)\rho(s)^{1+\frac{1}{d}}ds}.
\end{equation}
\begin{equation} \label{eq:ncut}
    \sqrt[d]{\frac{n}{k_n}}NCut_n(S)\longrightarrow  C_d B_S \int_S{f^{1-\frac{1}{d}}(s)\rho(s)^{1+\frac{1}{d}}ds}.
\end{equation}
where $C_d = \frac{2\eta_{d-1}}{(d+1)\eta_d^{1+1/d}}$, $B_S=\left(\mu(C^+)^{-1}+\mu(C^-)^{-1}\right)$,  and $\mu(C^{\pm})=\int_{C^{\pm}}f(x)dx$.
\end{thm}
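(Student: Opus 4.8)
The plan is to split the statement into two pieces and recombine them: (i) the asymptotics of the raw cut-value $Cut_n(S)$ on the RMD graph, and (ii) the asymptotics of the balancing denominators, i.e. $|C_S|,|\bar C_S|$ for RCut and $vol(C_S),vol(\bar C_S)$ for NCut. Piece (ii) is easy: $|C_S|/n\to\mu(C^+)$ and $|\bar C_S|/n\to\mu(C^-)$ by the law of large numbers (a Hoeffding bound controls the rate), which produces the factor $B_S=\mu(C^+)^{-1}+\mu(C^-)^{-1}$; for NCut one writes $vol(C)=\sum_{u\in C}\deg_{\mathrm{graph}}(u)$ and uses Theorem~\ref{rank-pvalue} together with concentration of local point-counts to see that $\deg_{\mathrm{graph}}(x)$ concentrates around $k_n\rho(x)$, so $vol(C_S)$ concentrates around a deterministic proxy; the different rescaling prefactors in Eqs.(\ref{eq:rcut})--(\ref{eq:ncut}) are exactly those that cancel the different powers of $n$ and $k_n$ carried by cardinalities versus volumes.

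Piece (i) is the heart. I would follow the limit-cut road map of \citet{Maier1} for $k$-NN graphs, with the single structural change that the connectivity radius is degree-dependent. First, apply Theorem~\ref{rank-pvalue} in a uniform form so that on a high-probability event $\sup_x|R(x)-p(x)|=o(1)$; since $\rho=\lambda+2(1-\lambda)p\in[\lambda,\,2-\lambda]$ is bounded away from $0$ and $\infty$ (here $\lambda\in(0,1)$ is constant), this gives $\deg(x)=k_n\rho(x)(1+o(1))$ uniformly, and we may condition on this event and treat the degree profile as essentially deterministic. A point $x$ at signed distance $t$ from $S$ joins its $\deg(x)$ nearest neighbours, which -- by concentration of the $k$-NN distance, valid precisely in the regime $k_n/n\to0$ with $k_n/\sqrt n\to\infty$ ($d=1$) or $k_n/\log n\to\infty$ ($d\ge2$) -- fill a ball of radius $r(x)\approx\bigl(\deg(x)/(n\,\eta_d\,f(x))\bigr)^{1/d}$. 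Its number of cross-neighbours is $\deg(x)$ times the fraction of $B(x,r(x))$ on the far side of $S$, a spherical-cap fraction depending only on $|t|/r(x)$. Summing the cross-neighbour counts over all nodes in the (shrinking) slab around $S$, replacing the empirical sum by $n\int f$, writing $x=s+t\nu_s$ with $dx=ds\,dt\,(1+o(1))$ (no curvature term since $S$ is flat), and rescaling $t=r(s)\tau$, the inner $\tau$-integral is a dimension-only constant whose evaluation yields the cap constant $\frac{2\eta_{d-1}}{(d+1)\eta_d}$. What remains is $Cut_n(S)\propto\int_S n\,f(s)\,\deg(s)\,r(s)\,ds$; substituting $r(s)\approx(\deg(s)/(n\eta_d f(s)))^{1/d}$ and $\deg(s)=k_n\rho(s)$ produces exactly the integrand $f(s)^{1-1/d}\rho(s)^{1+1/d}$, the powers of $n,k_n$ that the prefactors on the left of Eqs.(\ref{eq:rcut})--(\ref{eq:ncut}) are built to remove, and the constant $C_d=\frac{2\eta_{d-1}}{(d+1)\eta_d^{1+1/d}}$. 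Combining with piece (ii) gives the two displays; no uniform-over-$S$ control is needed since the hyperplane is fixed.

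The main obstacle is the entanglement of the two sources of randomness. Unlike in the plain $k$-NN setting, each node's degree is a statistic of the whole sample (through $R(x)$), so the clean integral-geometry computation is legitimate only after the ranks have been frozen near their limits $p(x)$ \emph{uniformly} in $x$; one must check that the $o(1)$ rank error enters the cut only through $\rho$, continuously and at a rate negligible against the $(n/k_n)^{1/d}$ rescaling. The second delicate point is the standard one for this line of work: the slab around $S$ has random, $x$-varying, vanishing width $r(x)$, so replacing the empirical measure by $n\,f\,dx$ and treating $f,\rho$ as locally constant across the slab must be justified by uniform deviation bounds for point-counts of balls (Chernoff/VC), and $\partial C$ must be kept away from $S$ using the assumed regularity; these are exactly the places where the stated $k_n$ growth conditions are consumed.
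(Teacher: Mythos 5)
Your proposal follows essentially the same route as the paper's proof: the balancing terms via binomial/degree concentration, and the cut term via the Maier-style limit analysis in which each point's $k_n\rho(x)$-NN radius concentrates around $\bigl(k_n\rho(x)/(n\eta_d f(x))\bigr)^{1/d}$, the cross-edge count reduces to a spherical-cap volume, and the integral is decomposed along $S$ and its normal to yield $C_d\int_S f^{1-1/d}\rho^{1+1/d}$, with McDiarmid/Chernoff-type concentration tying the empirical cut to this limit. The only difference is that you explicitly handle the coupling between the empirical ranks and the degrees via a uniform version of Theorem~\ref{rank-pvalue}, whereas the paper sidesteps this by assuming the idealized degrees $k_n\rho(x)$ directly; this is a refinement, not a divergence in approach.
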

%

{\bf Remark:} \\
(1) Compared to the limit expression on $k$-NN graph, there is an additional term $\rho(s)=(\lambda+2(1-\lambda)p(s))$. The monotonicity of $\rho(s)$ in $p(s)$ immediately implies that the ``infinitesimal'' cut contribution at low(high) density areas is reduced (increased).  To see the impact suppose $\lambda$ is small; we see that for cuts $S$ near modes, $p(s)\approx 1$ and this extra term is nearly $(2)^{1+\frac{1}{d}}$. For $S$ near valleys this term is nearly $(\lambda)^{1+\frac{1}{d}}<1$.
The cut-ratio $q$ is explicitly reduced. \\
(2) Smaller $\lambda$ further penalizes high density areas over low density areas, further reduces the cut-ratio $q$ and pulls down $(q,y)$ in Fig.1, thus has the ability to cope with even more unbalanced data (with smaller $y$). Therefore, without a priori information about how unbalanced the data is, parameterizing graphs with varying values of $\lambda$ provides for RCut(NCut) based algorithms the ability to adapt to data with varying levels of unbalancedness.

%
\section{Simulations}\label{sec:experiment}



Experiments in this section involve both synthetic and real data sets.
We focus on unbalanced data by randomly sampling from different classes in an unbalanced manner.
As for traditional graphs we also include $b$-matching graph \citep{JebWanCha09} with $b=k$.

For clustering experiments we apply both RCut and NCut based SC, but focus on NCut since it is generally known to perform better. We report performance by evaluating how well the clusters structures match the ground truth labels, as is the standard criterion for partitional clustering \citep{xu05}. For instance consider Tab.1 where error rates for USPS symbols 1,8,3,9 are tabulated. We parameterize various graphs and apply SC to get various partitions. Our model selection scheme picks the partition according to Eq.(\ref{eq:selection}), AGNOSTIC to the correspondence between samples and symbols. Errors are then reported by looking at mis-associations.

For SSL experiments we randomly pick labeled points among unbalancedly sampled data, guaranteeing at least one labeled from each class. SSL algorithms such as RCut-based GRF and NCut-based GTAM are applied on parameterized graphs built from partially labeled data, and generate various partitions. The model selection scheme picks the min-cut partition simply based on graph structures according to Eq.(\ref{eq:selection}). Then labels for unlabeled data are predicted based on the selected parition and compared against the UNKNOWN true labels to produce the error rates

Some general simulation parameters are:\\
{\bf (1)} We employ U-statistic technique in rank computation to reduce variance (Sec.\ref{subsec:rank}), with $B=5$.\\
{\bf (2)} All error rate results are averaged over 20 trials.\\
Other parameters will be specified below.



\subsection{Synthetic DataSets}\label{subsec:syn}
\begin{figure*}[tb]
\begin{centering}
\begin{minipage}[t]{.24\textwidth}
\includegraphics[width = 1\textwidth]{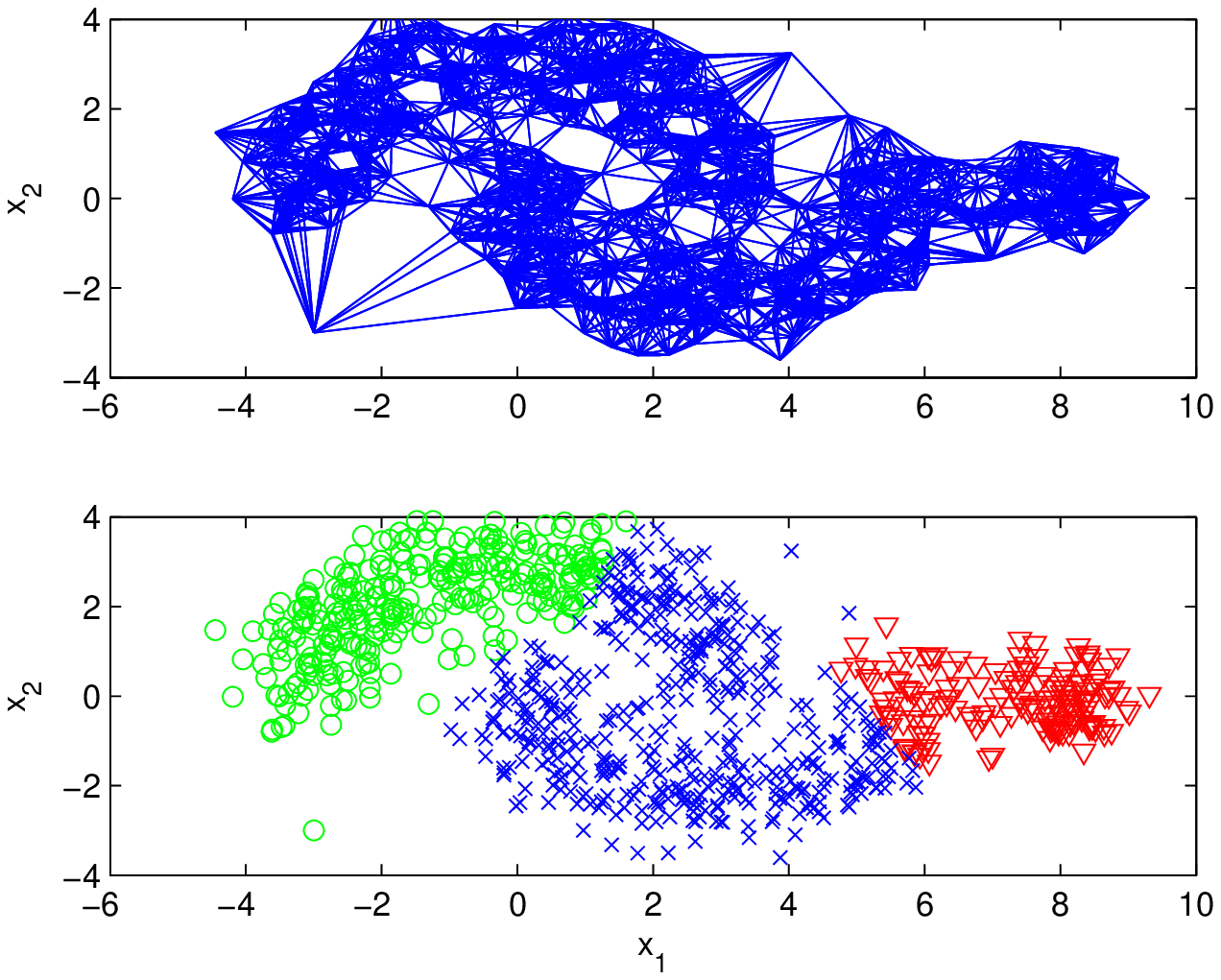}
\makebox[4cm]{\small (a) $k$-NN}
\end{minipage}
\begin{minipage}[t]{.24\textwidth}
\includegraphics[width = 1\textwidth]{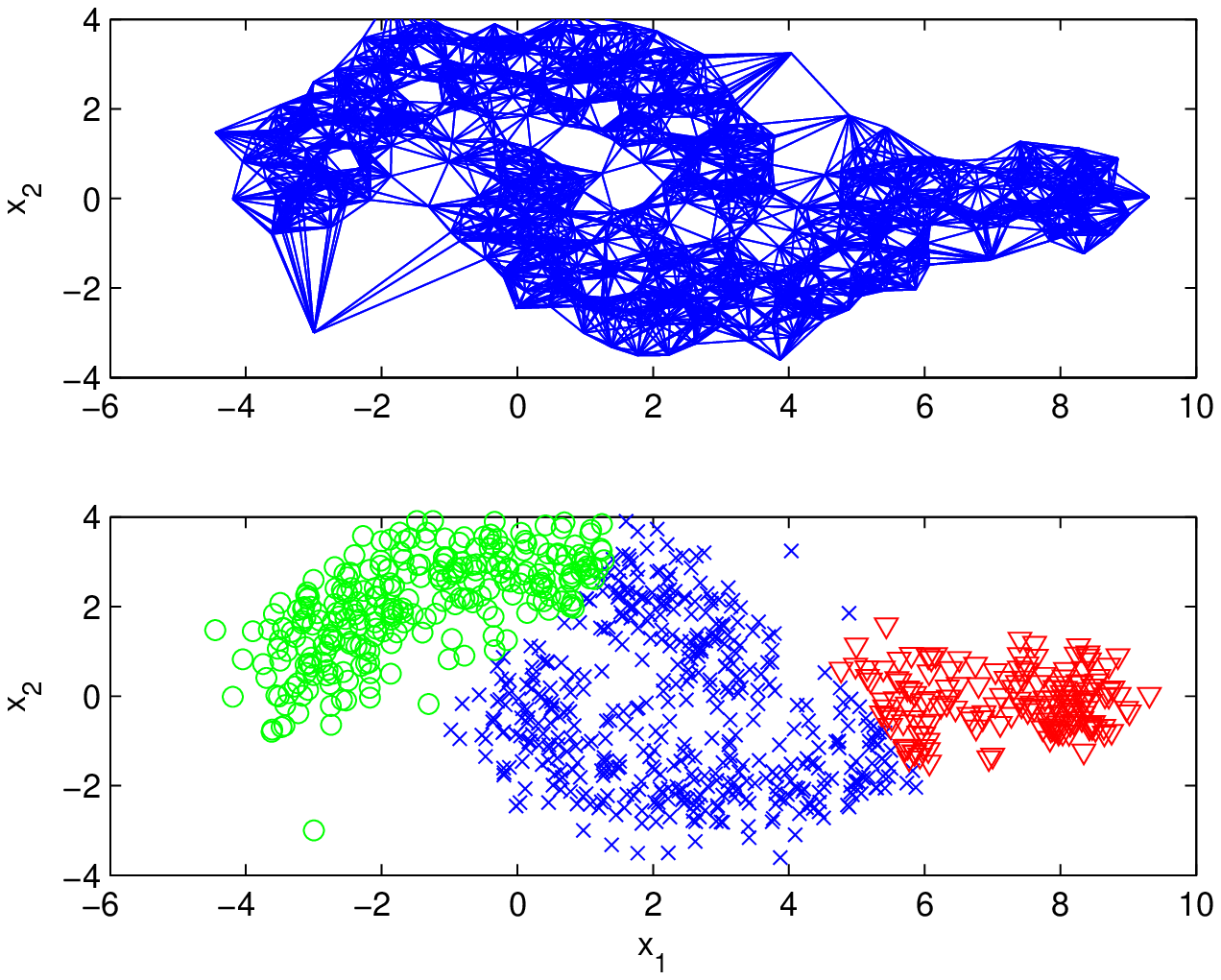}
\makebox[4cm]{\small (b) $b$-matching}
\end{minipage}
\begin{minipage}[t]{.24\textwidth}
\includegraphics[width = 1\textwidth]{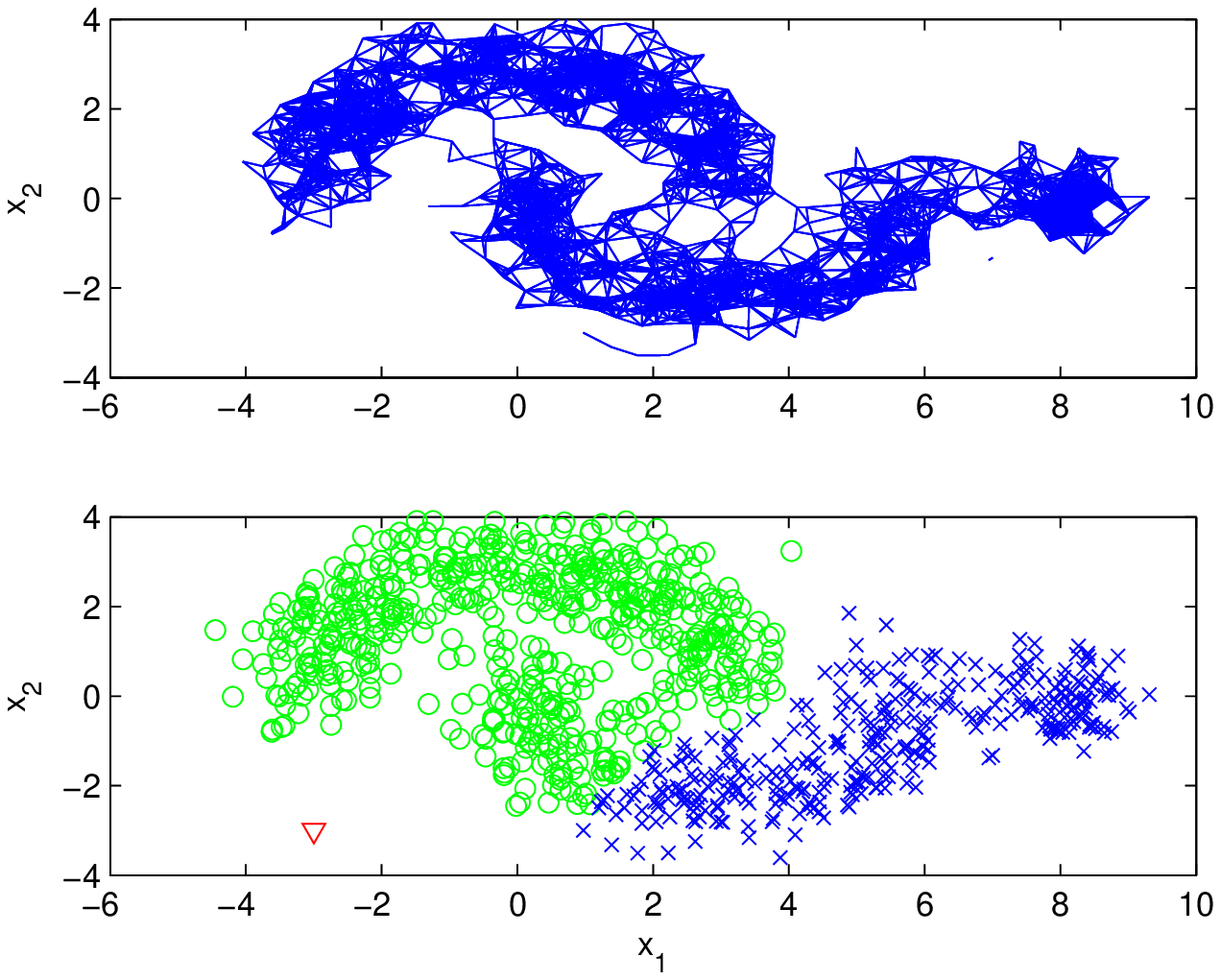}
\makebox[4cm]{\small (c) $\epsilon$-graph(full-RBF)}
\end{minipage}
\begin{minipage}[t]{.24\textwidth}
\includegraphics[width = 1\textwidth]{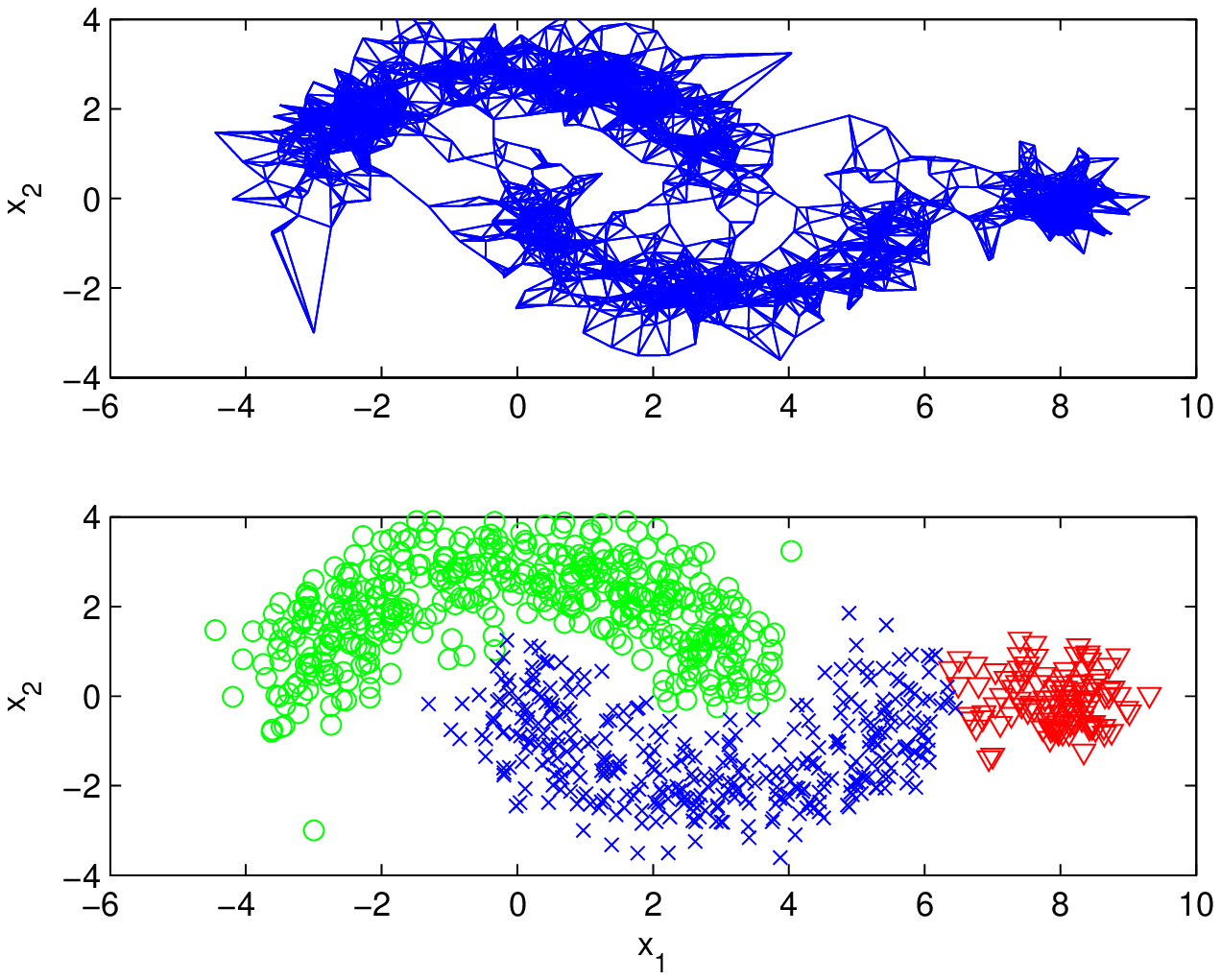}
\makebox[4cm]{\small (d) RMD}
\end{minipage}
\caption{\small Clustering results of 3-partition SC on 2 moons and 1 gaussian data set. SC on full-RBF($\epsilon$-graph) completely fails due to the outlier. For $k$-NN and $b$-matching graphs SC cannot recognize the long winding low-density regions between 2 moons, and fails to find the rightmost small cluster. Our method sparsifies the graph at low-density regions, allowing to cut along the valley, detect the small cluster and is robust to outliers.}
\label{fig:complex_shape}
\end{centering}
\end{figure*}
Consider a multi-cluster complex-shaped data set, which is composed of 1 small Gaussian and 2 moon-shaped proximal clusters shown in Fig.\ref{fig:complex_shape}. Sample size $n=1000$ with the rightmost small cluster $10\%$ and two moons $45\%$ each. In this example, for illustration we did not parameterize the graph or apply the model selection step. We fix $\lambda = 0.5$, and choose $k=l=30$, $\epsilon=\sigma=\tilde{d}_k$, where $\tilde{d}_k$ is the average $k$-NN distance. Model-based approaches can fail on such dataset due to the complex shapes of clusters. The 3-partition SC based on RCut is applied. On $k$-NN and $b$-matching graphs SC fails for two reasons: (1) SC cuts at balanced positions and cannot detect the rightmost small cluster; (2) SC cannot recognize the long winding low-density regions between 2 moons because there are too many spurious edges and the Cut value along the curve is big. SC fails on $\epsilon$-graph(similar on full-RBF) because the outlier point forms a singleton cluster, and also cannot recognize the low-density curve. RMD graph significantly sparsifies the graph at low-density regions, enabling SC to cut along the valley, detect the small cluster and is robust to outliers.
\subsection{Real DataSets}\label{subsec:real}
We focus on unbalanced settings and consider several real datasets. We construct $k$-NN, $b$-match, full-RBF and RMD graphs all combined with RBF weights, but do not include the $\epsilon$-graph because of its overall poor performance \citep{JebWanCha09}.
We discretize not only $\lambda$ but also $k$, $\sigma$ to parameterize graphs.
The sample size is around 750 to 1500, described respectively. We vary $k$ in $\{10,20,30,...,100\}$.
Note that although small $k$ in our scheme may lead to disconnected graphs due to minimum degree $\lambda k$ in Eq.(\ref{eq:degree}), the resulting partitions with singleton clusters will be ruled out by the constraints of Eq.(\ref{eq:selection}).
Also notice that for $\lambda=1$, RMD graph is identical to $k$-NN graph.
For RBF parameter $\sigma$ it has been suggested to be of the same scale as the average $k$-NN distance $\tilde{d}_k$ \citep{WanJebCha08}. This suggests a discretization of $\sigma$ as $2^j \tilde{d}_k$ with $j=-3,\,-2,\ldots,\,3$.
We discretize $\lambda \in \{0.2,0.4,0.6,0.8,1\}$.
In the model selection step Eq.(\ref{eq:selection}), cut values of various partitions are evaluated on a same $k_0$-NN graph with $k_0=30, \sigma = \tilde{d}_{30}$ before selecting the min-cut partition.
$l$ is fixed to be 30. The true number of clusters/classes $K$ is supposed to be known. We assume meaningful clusters are at least $5\%$ of the total number of points, $\delta=0.05$. We set the GTAM parameter $\mu=0.05$ \citep{JebWanCha09} for the SSL tasks, and each time 20 randomly labeled samples are chosen with at least one sample from each class.

\begin{figure}[tb]
\begin{centering}
\begin{minipage}[t]{.23\textwidth}
\includegraphics[width = 1\textwidth]{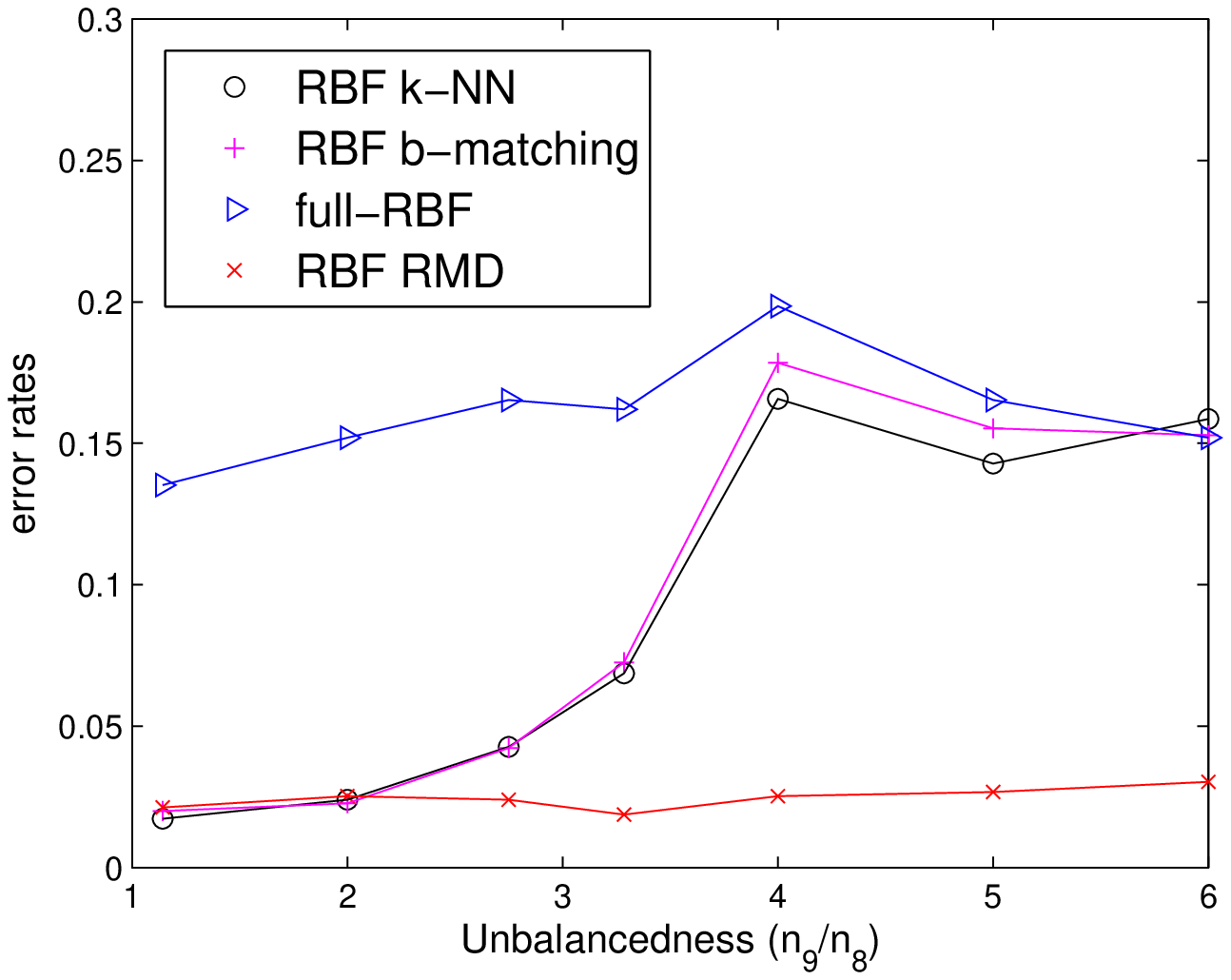}
\makebox[3cm]{\small (a) SC}
\end{minipage}
\begin{minipage}[t]{.23\textwidth}
\includegraphics[width = 1\textwidth]{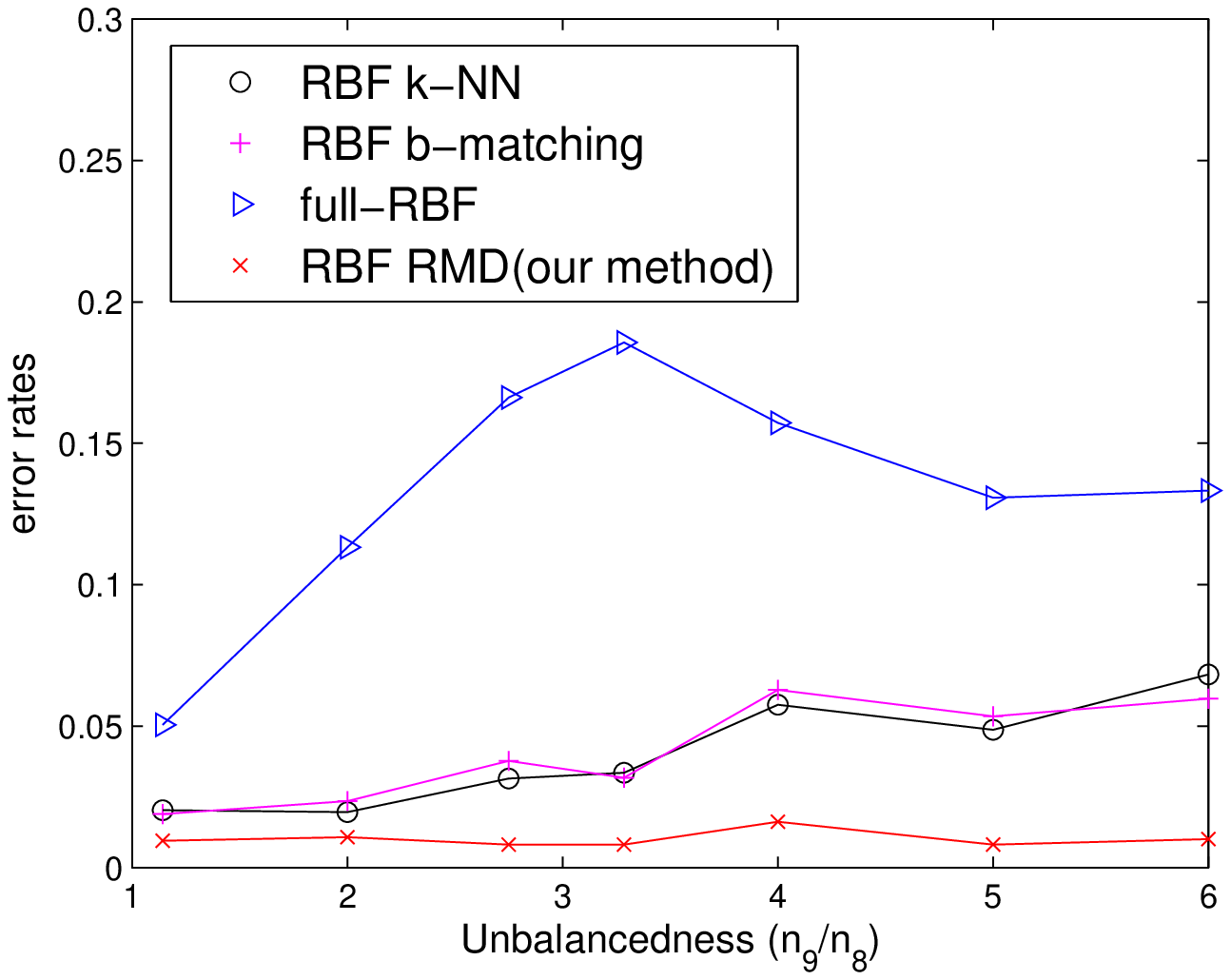}
\makebox[3cm]{\small (b) GTAM}
\end{minipage}
\caption{\small Error rate performance of SC and GTAM on USPS 8vs9 with varying levels of unbalancedness. We omitted GRF since the results are qualitatively similar. Our method adapts to different levels of unbalancedness much better than traditional graphs. Furthermore, when data is very unbalanced (big $n_9/n_8$), varying $k,\sigma$ does not really help; decreasing $\lambda$ adapts the algorithm well.}
\label{fig:USPS8v9}
\end{centering}
\vspace{-0.2in}
\end{figure}
\textbf{Varying Unbalancedness:}
We start with a comparison for 8vs9 of the 256-dim USPS digit data set. We keep the total sample size as 750, and vary the unbalancedness, i.e. the proportion of numbers of points randomly sampled from two classes, denoted by $n_8,n_9$. Normalized SC and GTAM are applied. Fig.\ref{fig:USPS8v9} shows that when the underlying clusters/classes are balanced, our method works as perfect as traditional graphs; as the unbalancedness increases, the performance severely degrades on traditional graphs, while our method can adapt the graph-based learning algorithms to different levels of unbalancedness very well.

\begin{table*}[tb]
\caption{\small Error rate performance of normalized SC on various graphs for unbalanced real data sets. Our method performs significantly better than other methods.}
\begin{center}
\begin{tabular}{|c||c|c|c|c|c|c|c|c|c|c|}
  \hline
  \multirow{2}{*}{Error Rates(\%)}   &   \multicolumn{2}{c|}{USPS}  &   \multicolumn{3}{c|}{SatImg}  &   \multicolumn{3}{c|}{OptDigit}   & \multicolumn{2}{c|}{LetterRec} \\
  \cline{2-11}
  & 8vs9 & 1,8,3,9 & 4vs3 & 3,4,5 & 1,4,7 & 9vs8 & 6vs8 & 1,4,8,9 & 6vs7 & 6,7,8 \\
  \hline\hline
  RBF $k$-NN        & 16.67 & 13.21 & 12.80 & 18.94 & 25.33 & 9.67  & 10.76  & 26.76 & 4.89 & 37.72 \\
  RBF $b$-matching  & 17.33 & 12.75 & 12.73 & 18.86 & 25.67 & 10.11  & 11.44  & 28.53 & 5.13 & 38.33 \\
  full-RBF          & 19.87 & 16.56 & 18.59 & 21.33 & 34.69 & 11.61 & 15.47 & 36.22 & 7.45 & 35.98 \\
  full-aRBF         & 18.35 & 16.26 & 16.79 & 20.15 & 35.91 & 10.88 & 13.27 & 33.86 & 7.58 & 35.27 \\
  RBF RMD           & 4.80  & 9.18 & 7.87 & 15.26 & 19.72 & 5.43  & 6.67  & 21.35 & 2.92 & 28.68 \\
  \hline
\end{tabular}
\end{center}
\label{tab:real_SC}
\end{table*}
\begin{table*}[tb]
\caption{\small Error rate performance of GRF and GTAM for unbalanced real data sets. Our method performs significantly better than other methods.}
\begin{center}
\begin{tabular}{|c|c||c|c|c|c|c|c|c|c|c|}
  \hline
  \multicolumn{2}{|c||}{\multirow{2}{*}{Error Rates(\%)}}  &   \multicolumn{2}{c|}{USPS}  &   \multicolumn{2}{c|}{SatImg}  &   \multicolumn{3}{c|}{OptDigit}   & \multicolumn{2}{c|}{LetterRec} \\
  \cline{3-11}
  \multicolumn{2}{|c||}{}  & 8vs6 & 1,8,3,9 & 4vs3 & 1,4,7 & 6vs8 & 8vs9 & 6,1,8 & 6vs7 & 6,7,8 \\
  \hline\hline
  \multirow{4}{*}{GRF}
    & RBF $k$-NN            & 5.70 & 13.29 & 14.64 & 16.68 & 5.68  & 7.57  & 7.53 & 7.67 & 28.33 \\
    & RBF $b$-matching      & 6.02 & 13.06 & 13.89 & 16.22 & 5.95  & 7.85  & 7.92 & 7.82 & 29.21 \\
    & full-RBF              & 15.41 & 12.37 & 14.22 & 17.58 & 5.62 & 9.28 & 7.74 & 11.52 & 28.91 \\
    & full-aRBF             & 12.89 & 11.74 & 13.58 & 17.86 & 5.78 & 8.66 & 7.88 & 10.10 & 28.36 \\
    & RBF RMD               & 1.08  & 10.24 & 9.74 & 15.04 & 2.07  & 2.30  & 5.82 & 5.23 & 27.24 \\
  \hline
  \multirow{4}{*}{GTAM}
    & RBF $k$-NN            & 4.11  & 10.88 & 26.63 & 20.68 & 11.76 & 5.74  & 12.68 & 19.45 & 27.66 \\
    & RBF $b$-matching      & 3.96  & 10.83 & 27.03 & 20.83 & 12.48 & 5.65  & 12.28 & 18.85 & 28.01 \\
    & full-RBF              & 16.98  & 11.28 & 18.82 & 21.16 & 13.59 & 7.73 & 13.09 & 18.66 & 30.28 \\
    & full-aRBF             & 13.66  & 10.05 & 17.63 & 22.69 & 12.15 & 7.44 & 13.09 & 17.85 & 31.71 \\
    & RBF RMD               & 1.22  & 9.13 & 18.68 & 19.24 & 5.81  & 3.12  & 10.73 & 15.67 & 25.19 \\
  \hline
\end{tabular}
\end{center}
\label{tab:real_SSL}
\end{table*}
\textbf{Other Real Data Sets:}
We apply SC and SSL algorithms on several other real data sets including USPS, waveform database generator(21-dim), Statlog landsat satellite images(36-dim), letter recognition images(16-dim) and optical recognition of handwritten digits(64-dim) \citep{uci10}.
We randomly sample 150/600, 200/400/600, 200/300/400/500 points for 2,3,4-class cases, with corresponding orders of class indices listed in Tab.\ref{tab:real_SC},\ref{tab:real_SSL}.
For comparison we also include the full graph with adaptive RBF weights (full-aRBF), where $\sigma_u$ is chosen as the $k$-NN distance of node $u$, and $w(u,v)=exp\left(-d(u,v)^2/2\sigma_u\sigma_v\right)$ \citep{Zelnik04}.
Tab.\ref{tab:real_SC},\ref{tab:real_SSL} shows that varying $k$, $\sigma$ for traditional graphs does not work well, while our method consistently performs better.

\subsection{Applications to Small Cluster Detection}
We illustrate how our method can be used to find small-size clusters. This type of problem may arise in community detection in large real networks, where graph-based approaches are popular but small-size community detection is difficult \citep{Shah10}.

The dataset depicted in Fig.\ref{fig:multiple_cuts} has 1 large and 2 small proximal Gaussian components along $x_1$ axis: $\sum^{3}_{i=1}\alpha_iN(\mu_i,\Sigma_i)$, where $\alpha_1:\alpha_2:\alpha_3=2:8:1$, $\mu_1$=[-0.7;0], $\mu_2$=[4.5;0], $\mu_3$=[9.7;0], $\Sigma_1=I, \Sigma_2=diag(2,1), \Sigma_3=0.7I$. Binary weight is adopted.

Fig.\ref{fig:multiple_cuts}(a) shows a plot of cut values for different cut positions averaged over 20 Monte Carlo runs. We note that the cut-value plot resembles the underlying density. Two density valleys are both at unbalanced positions. The rightmost cluster is smaller than the left cluster, but has a deeper valley.

To apply our method we vary the cluster-size threshold $\delta$ in Eq.(\ref{eq:selection}). We now plot the Cut-value against $\delta$ as shown in Fig.\ref{fig:multiple_cuts}(b). As seen in Fig.\ref{fig:multiple_cuts}(b), when $\delta\geq 0.3$, the optimal cut is close to the valley. However, since the proportion of data samples in the smaller clusters is less than 30\% we see that the optimal cut is bounded away from both valleys. As $\delta$ is decreased in the range $0.25\geq\delta\geq0.15$, the optimal cut is now attained at the left valley($x_1\approx 1.8$). An interesting phenomena is that the curve flattens out in this range.
This corresponds to the fact that the cut value is minimized at this position ($x_1 = 1.8$) for any value of $\delta \in [.15,\,.25]$. This flattening out can happen only at valleys since valleys represent a ``local'' minima for the model selection step of Eq.~\ref{eq:selection} under the constraint imposed by $\delta$. Consequently, small clusters can be detected based on the flat spots. Next when we further vary $\delta$ in the region $0.1\geq\delta\geq0.05$, the best cut is attained near the right and deeper valley($x_1\approx 8.2$). Again the curve flattens out revealing another small cluster.
\begin{figure*}[tb]
\begin{centering}
\begin{minipage}[t]{.32\textwidth}
\includegraphics[width = 1\textwidth]{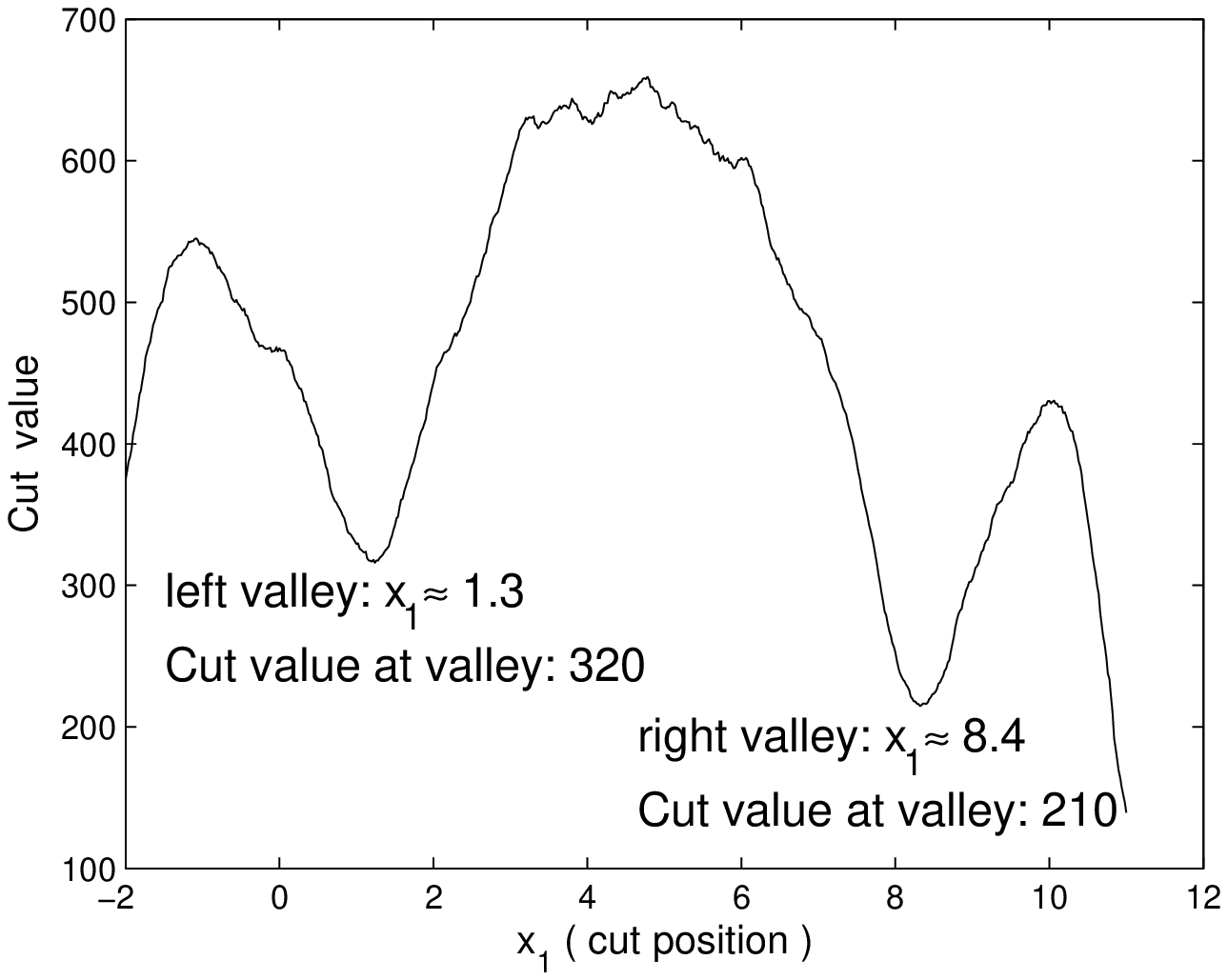}
\makebox[5.5cm]{\small (a) Cut value vs. cut position}
\end{minipage}
\begin{minipage}[t]{.32\textwidth}
\includegraphics[width = 1\textwidth]{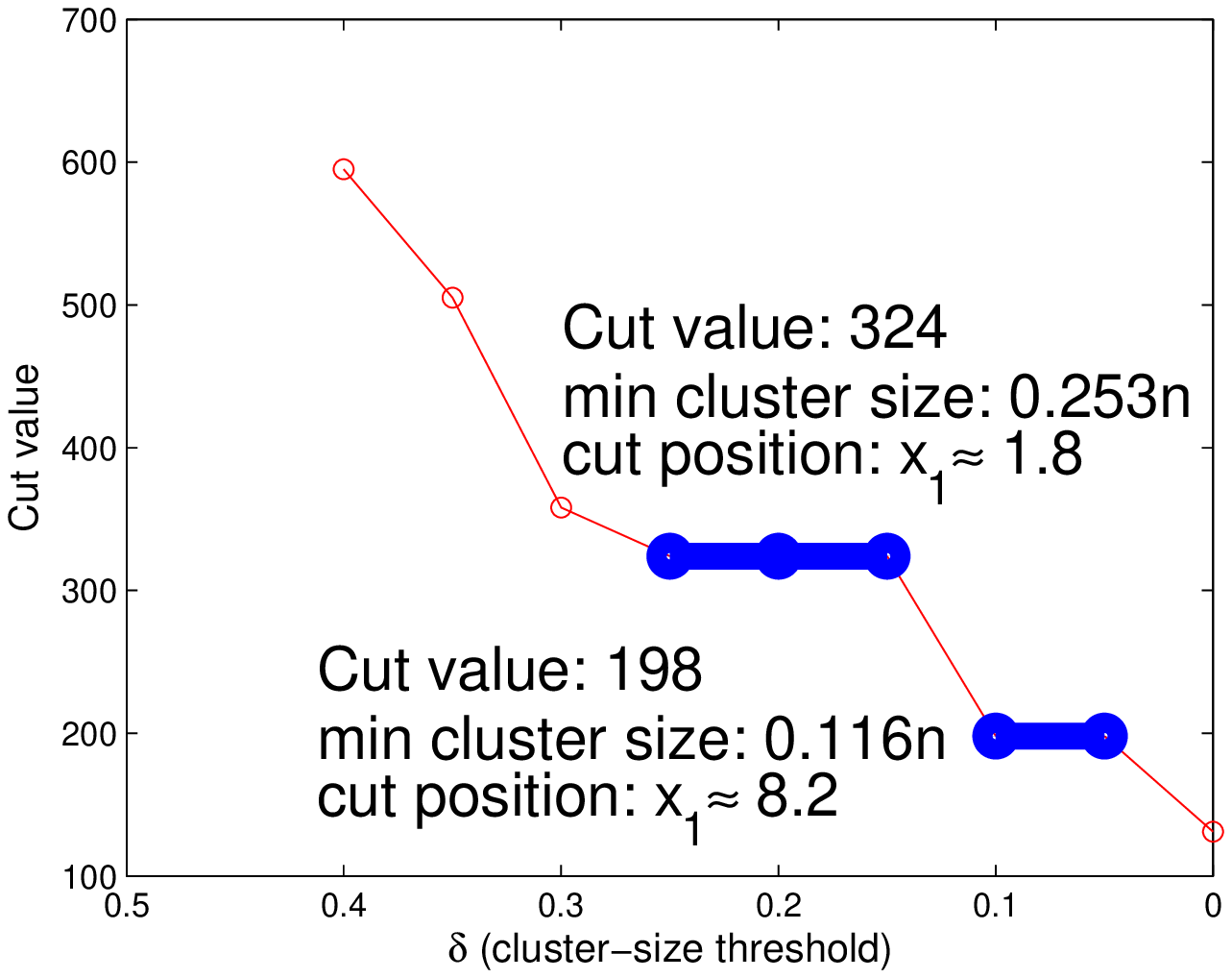}
\makebox[5.5cm]{\small (b) Cut value vs. Cluster size($\delta$)}
\end{minipage}
\begin{minipage}[t]{.32\textwidth}
\includegraphics[width = 1\textwidth]{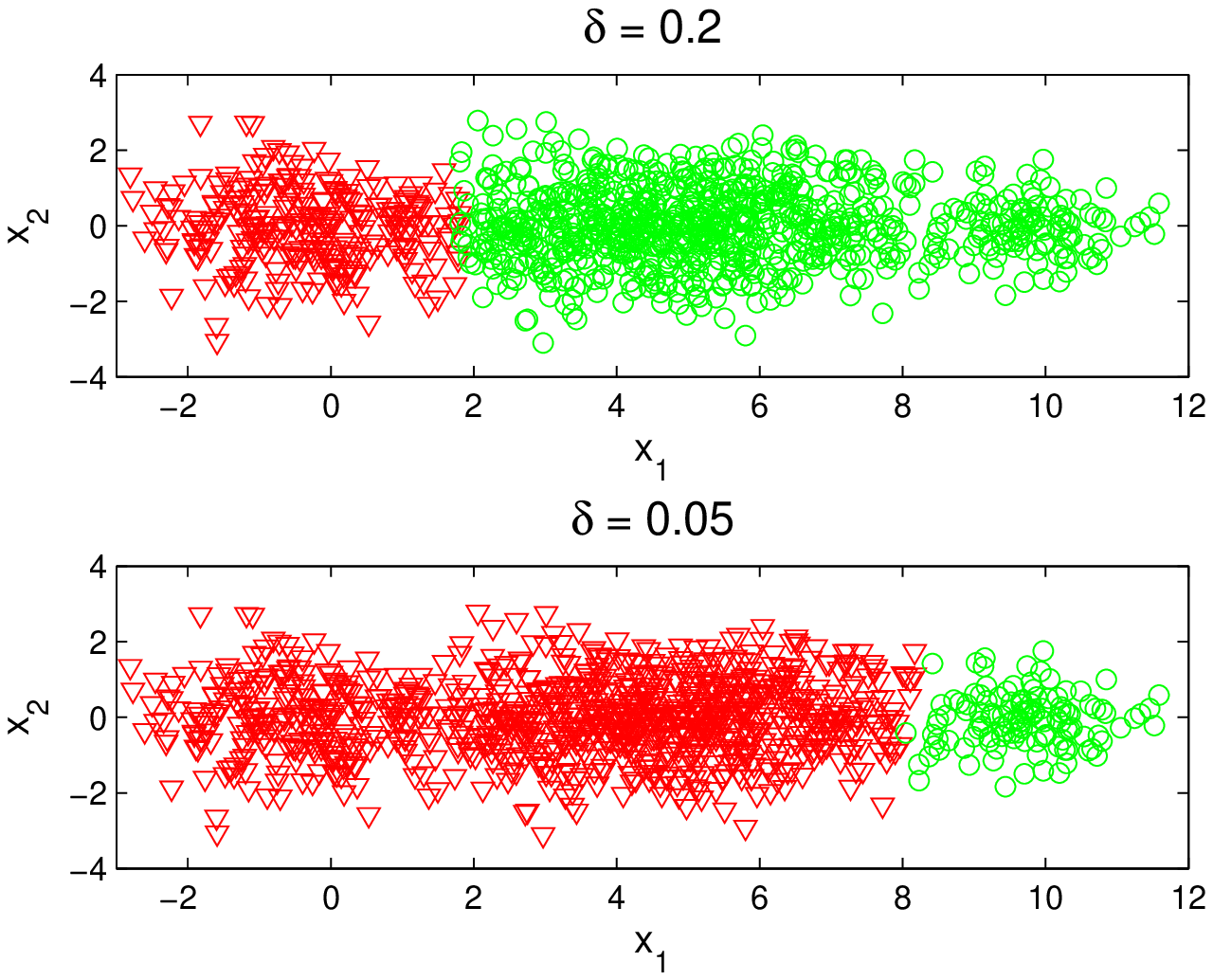}
\makebox[5.5cm]{\small (c) different clustering results }
\end{minipage}
\caption{\small 2-partition SC results of 1 large and 2 small proximal gaussian mixture components. Both valleys are at unbalanced positions. The rightmost cluster is smaller than the left, with a deeper valley. Results in (b) are from one run. As shown in (b) and (c), the left cluster is detected for a larger $\delta$, where the right smaller one is viewed as outliers. When even reducing $\delta$, the right smaller one is detected(Eq.(\ref{eq:selection})).}
\label{fig:multiple_cuts}
\end{centering}
\end{figure*}

\subsection{Comments}\label{subsec:discRMD}
{\bf Tuning Parameters:}
$\lambda$ is a parameter that is optimized through the model selection step and does not count as a tuning parameter(so are $k$ and $\sigma$ under our framework). The choice of $\delta$ is based on our prior to find sizable clusters, say 5\% to 10\% of the data.
As for $k_0$ and $l$, our method appears to be relatively insensitive to the values of $k_0,l$. Unlike graph parameters $\lambda,k,\sigma$ which have direct impact on graph-based algorithms, $k_0$ is used to relatively compare different partitions and $l$ is used to relatively order data points.
It is not surprising that the relative ranking of high/low density cuts (or points near high/low density areas) does not substantially change when compared on a nearest neighbor graph with different $k_0$ ($l$), as is usually the case in our experiments.
Similar phenomena have been observed in the context of high-dimensional anomaly detection \citep{Zhao09,Zhao12}.
We fix $k_0=l$ roughly the same scale as $\sqrt n$ in experiments.


\noindent{\bf Time Complexity:}
The time complexity of U-statistic rank computation is $O(Bdn^2logn)$, where $B$ is a small constant, 5 in our experiments. RMD graph construction is $O(dn^2logn)$, same as constructing a $k$-NN graph.
Computing Cut value and checking the sizable cluster constraint for a partition takes $O(n^2)$. So if totally $D$ graphs are parameterized and the graph-based learning algorithm needs $T$, the whole complexity is $O((B+D)dn^2logn+DT)$.

\section{Conclusions}\label{sec:conclusion}
We have shown that RCut(NCut) based spectral methods on traditional graphs can lead to balanced cuts rather than density valley cuts for unbalanced proximal data. We propose a systematic procedure to parameterize graphs based on a rank-modulated degree (RMD) scheme, which adaptively sparsifies/densifies the neighborhoods of nodes. This scheme effectively adapts RCut(NCut) based methods to unbalanced data. We then present a model selection step which allows for best sizable clusters separated by smallest cut value. By constraining the smallest cluster sizes we can detect multiple small clusters and generate different meaningful cuts. Our synthetic and real simulations demonstrate significant performance improvements over existing methods for unbalanced data. The ability to detect small-size clusters indicates our idea may be utilized in other applications such as community detection in large networks, where graph-based approaches are popular but small-size community detection is difficult.

\bibliographystyle{icml2013}
\bibliography{RMD_bib}

\onecolumn

\newpage

\section*{Appendix: Proofs of Theorems}
For ease of development, let $n=m_1(m_2+1)$, and divide $n$ data points into: $D=D_0 \bigcup  D_1 \bigcup ... \bigcup D_{m_1}$, where $D_0=\{x_1,...,x_{m_1}\}$, and each $D_j, j=1,...,m_1$ involves $m_2$ points. $D_j$ is used to generate the statistic $G$ for $u$ and $x_j\in D_0$, for $j=1,...,m_1$. $D_0$ is used to compute the rank of $u$:
\begin{equation}
    R(u) = \frac{1}{m_1}\sum_{j=1}^{m_1} \mathbb{I}_{\{ G(x_j;D_j)>G(u;D_j) \}}
\end{equation}
We provide the proof for the statistic $G(u)$ of the following form:
\begin{eqnarray}
  G(u;D_j) &=& \frac{1}{l}\sum^{l+\lfloor \frac{l}{2} \rfloor}_{i=l-\lfloor \frac{l-1}{2} \rfloor}\left( \frac{l}{i} \right)^{\frac{1}{d}}D_{(i)}(u).
\end{eqnarray}
where $D_{(i)}(u)$ denotes the distance from $u$ to its $i$-th nearest neighbor among $m_2$ points in $D_j$. Practically we can omit the weight as Eq.(\ref{equ:G(x)}) in the paper.

\textbf{Regularity conditions:} $f(\cdot)$ is continuous and lower-bounded: $f(x) \geq f_{min}>0$. It is smooth, i.e. $||\nabla f(x)||\leq\lambda$, where $\nabla f(x)$ is the gradient of $f(\cdot)$ at $x$. Flat regions are disallowed, i.e. $\forall x \in \mathbb{X}$, $\forall \sigma>0$, $\mathcal{P}\left\{y: |f(y)-f(x)|<\sigma\right\}\leq M\sigma$, where $M$ is a constant.

\noindent\textbf{Proof of Theorem 1:}
\begin{proof}
The proof involves two steps:
\begin{itemize}
  \item[1.] The expectation of the empirical rank $\mathbb{E}\left[R(u)\right]$ is shown to converge to $p(u)$ as $n\rightarrow\infty$.
  \item[2.] The empirical rank $R(u)$ is shown to concentrate at its expectation as $n\rightarrow\infty$.
\end{itemize}
The first step is shown through Lemma \ref{lem:expectation}. For the second step, notice that the rank $R(u) = \frac{1}{m_1}\sum_{j=1}^{m_1} Y_j$, where $Y_j = \mathbb{I}_{\{ G(x_j;D_j)>G(u;D_j) \}}$ is independent across different $j$'s, and $Y_j \in [0,1]$. By Hoeffding's inequality, we have:
\begin{equation}
    \mathbb{P}\left( | R(u) - \mathbb{E}\left[R(u)\right] | > \epsilon \right) < 2\exp\left( -2m_1\epsilon^2 \right)
\end{equation}
Combining these two steps finishes the proof.
\end{proof}

\noindent\textbf{Proof of Theorem 2:}
\begin{proof}
We only present a brief outline of the proof. We want to establish the convergence result of the cut term and the balancing terms respectively, that is:
\begin{eqnarray}
    &\frac{1}{nk_n}\sqrt[d]{\frac{n}{k_n}}cut_n(S)
    \rightarrow C_d\int_S{f^{1-\frac{1}{d}}(s)\rho(s)^{1+\frac{1}{d}}ds}. \label{eq:term1}\\
    &n\frac{1}{|V^\pm|}\rightarrow
    \frac{1}{\mu(C^\pm)}. \label{eq:term2R}  \\
    &nk_n\frac{1}{vol(V^\pm)}\rightarrow
    \frac{1}{\mu(C^\pm)}. \label{eq:term2N}
\end{eqnarray}
where $V^+(V^-)=\{x\in{V}: x\in{C^+}(C^-)\}$ are the discrete version of $C^+(C^-)$.

The balancing terms Eq.(\ref{eq:term2R},\ref{eq:term2N}) are obtained similarly using Chernoff bound on the sum of binomial random variables, since the number of points in $V^\pm$ is binomially distributed $Binom(n,\mu(C^\pm))$. Details can be found in \citet{Maier1}.

Eq.(\ref{eq:term1}) is established in two steps. First we can show that the LHS cut term converges to its expectation $\mathbb{E}\left(\frac{1}{nk_n}\sqrt[d]{\frac{n}{k_n}}cut_n(S)\right)$ by McDiarmid's inequality. This can also be found in \citet{Maier1}. Second we show this expectation term actually converges to the RHS of Eq.(\ref{eq:term1}). This is shown in Lemma~\ref{expectation}.

\end{proof}

\begin{lem}\label{expectation}
Given the assumptions of Theorem 2,
\begin{equation}
    \mathbb{E}\left(\frac{1}{nk_n}\sqrt[d]{\frac{n}{k_n}}cut_n(S)\right)\longrightarrow C_d\int_S{f^{1-\frac{1}{d}}(s)\rho(s)^{1+\frac{1}{d}}ds}.
\end{equation}
where $C_d=\frac{2\eta_{d-1}}{(d+1)\eta_d^{1+1/d}}$.
\end{lem}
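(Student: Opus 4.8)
The plan is to follow the limit-cut computation of \citet{Maier1} for $k$-NN graphs, but to carry through the point-wise degree $\deg(x)=k_n\rho(x)$ prescribed by Eq.(\ref{eq:degree}); here I use the ``ideal'' hypothesis of Theorem~\ref{part2}, so that (via the consistency of $R$ proved in Theorem~\ref{rank-pvalue}) $\deg(x)=k_n\bigl(\lambda+2(1-\lambda)p(x)\bigr)=k_n\rho(x)$ is a fixed continuous function of $x$. First I would rewrite the expected cut as a pair integral: since $cut_n(S)$ counts crossing edges,
\[ \mathbb{E}\bigl[cut_n(S)\bigr]=n(n-1)\int_{C^+}\!\!\int_{C^-} f(x)f(y)\,P_n(x,y)\,dy\,dx, \]
where $P_n(x,y)$ is the probability that $x$ and $y$ are joined in the RMD graph given that they are two of the $n$ sample points. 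Under the connection rule ``$y$ is among the $\deg(x)$ nearest neighbours of $x$, or vice versa'', $P_n(x,y)$ is expressed through binomial tail probabilities of the form $\mathbb{P}\{\mathrm{Binom}(n-2,\mu(B(x,\|x-y\|)))\le \deg(x)-1\}$ together with the symmetric term, exactly as in \citet{Maier1} but with $k$ replaced by the local value $\deg(x)=k_n\rho(x)$.

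Second, I would localise. One shows $P_n(x,y)$ is negligible unless $\|x-y\|=O\bigl((k_n/n)^{1/d}\bigr)$, so that the leading contribution comes from $x,y$ lying in a tube of width $O\bigl((k_n/n)^{1/d}\bigr)$ around $S$. Introduce local coordinates $(s,t)$, with $s$ the nearest point on $S$ and $t$ the signed distance, and the local length scale $r_n(s)=\bigl(k_n\rho(s)/(n\eta_d f(s))\bigr)^{1/d}$; after rescaling $t$ and the separation $\|x-y\|$ by $r_n(s)$, and using the regularity conditions ($f$ continuous, bounded below and smooth; $S$ a fixed smooth hypersurface) together with the continuity of $\rho$ inherited from Theorem~\ref{rank-pvalue}, I would replace $f$ and $\rho$ by their values $f(s),\rho(s)$ on the shrinking tube with vanishing relative error, flatten $S$ to its tangent hyperplane, and — using $k_n\to\infty$ and $k_n/n\to0$ — replace the binomial tail by its Poisson limit.

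Third comes the bookkeeping. After rescaling, the inner $y$- and $t$-integrals reduce to a dimensionless integral over the flat half-space model — precisely the one evaluated in \citet{Maier1} — producing the constant $C_d=\frac{2\eta_{d-1}}{(d+1)\eta_d^{1+1/d}}$. The remaining $s$-dependence is $f(s)^2\,r_n(s)^{\,d+1}$, arising from the two density factors $f(x),f(y)$ together with the volume $\asymp r_n(s)^d$ of the region where $P_n=O(1)$ and the tube width $\asymp r_n(s)$. Since $r_n(s)=\bigl(k_n\rho(s)/(n\eta_d f(s))\bigr)^{1/d}$, this equals a constant times $(k_n/n)^{(d+1)/d} f(s)^{1-1/d}\rho(s)^{1+1/d}$; multiplying by the prefactor $n(n-1)$ and then by $\frac{1}{n k_n}\sqrt[d]{n/k_n}$ makes all powers of $n$ and $k_n$ cancel, leaving $C_d\int_S f(s)^{1-1/d}\rho(s)^{1+1/d}\,ds$, which is the asserted limit of $\mathbb{E}\bigl[\frac{1}{nk_n}\sqrt[d]{n/k_n}\,cut_n(S)\bigr]$. (Combined with the McDiarmid concentration of $cut_n(S)$ around its mean already invoked in the proof sketch of Theorem~\ref{part2}, this yields Eq.(\ref{eq:term1}).)

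I expect the localisation and error control to be the main obstacle: bounding the contribution of pairs at distance $\gg r_n(s)$ (exponentially small via Chernoff/binomial tail bounds), of pairs near $\partial\,\mathrm{supp}(f)$ or near high-curvature parts of $S$, the deviation of the binomial tail from its Poisson limit, and the oscillation of $f$ and $\rho$ across the tube. These estimates are essentially those of \citet{Maier1}; the only genuinely new feature is the extra factor $\rho(x)$, which enters both through the connection radius ($r_n\propto\rho^{1/d}$) and through the number of cross-$S$ neighbours of a near-boundary point ($\propto\deg(x)\propto\rho$), producing the net exponent $1+\frac{1}{d}$, and one must verify that the continuity of $\rho$ (from Theorem~\ref{rank-pvalue}) indeed makes it locally constant on the tube. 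The dimension-dependent growth rates on $k_n$ ($k_n/\sqrt n\to\infty$ for $d=1$, $k_n/\log n\to\infty$ for $d\ge2$) are exactly what keep all of these approximations valid.
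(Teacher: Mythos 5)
Your proposal is correct and takes essentially the same route as the paper: both adapt the Maier et al.\ limit-cut computation to the locally modulated degree $k_n\rho(x)$, localize the expected cut to a tube of width $\sim\left(k_n\rho(s)/(n\eta_d f(s))\right)^{1/d}$ around $S$, extract $C_d=\frac{2\eta_{d-1}}{(d+1)\eta_d^{1+1/d}}$ from the flat half-space/spherical-cap integral, and let the powers of $n$ and $k_n$ cancel against the normalization. The only difference is bookkeeping: you expand $\mathbb{E}[cut_n(S)]$ over pairs via binomial-tail connection probabilities (with a Poisson limit), whereas the paper conditions per point on the $k_n\rho(x)$-nearest-neighbor radius and uses its concentration around $\left(k_n\rho(x)/((n-1)f(x)\eta_d)\right)^{1/d}$; the subsequent geometry and scaling are identical.
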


\begin{proof}
The proof is a simple extension of \citet{Maier2}. We provide an outline here. The first trick is to define a cut function for a fixed point $x_i\in V^+$, whose expectation is easier to compute:
\begin{eqnarray}
cut_{x_i} = \sum_{v\in V^{-},(x_i,v)\in E}w(x_i,v).
\end{eqnarray}
Similarly, we can define $cut_{x_i}$ for $x_i\in V^-$. The expectation of $cut_{x_i}$ and  $cut_n(S)$ can be related:
\begin{eqnarray}\label{eq:expect}
\mathbb{E}(cut_n(S))=n\mathbb{E}_x(\mathbb{E}(cut_{x}))
\end{eqnarray}
Then the value of $\mathbb{E}(cut_{x_i})$ can be computed as,
\begin{equation}
    (n-1)\int_0^{\infty}{\left[\int_{B(x_i,r)\cap{C^-}}f(y)dy\right]dF_{R_{x_i}^k}(r)}.
\end{equation}
where $r$ is the distance of $x_i$ to its $k_n\rho(x_i)$-th nearest neighbor. The value of $r$ is a random variable and can be characterized by the CDF $F_{R_{x_i}^k}(r)$.
Combining equation \ref{eq:expect} we can write down the whole expected cut value
\begin{eqnarray}
  \mathbb{E}(cut_n(S)) =n\mathbb{E}_x(\mathbb{E}(cut_{x}))= n\int_{\mathbb{R}^d}f(x)\mathbb{E}(cut_{x})dx \\
   = n(n-1)\int_{\mathbb{R}^d}f(x)\left[\int_0^{\infty}{g(x,r)dF_{R_x^k}(r)}\right]dx.
\end{eqnarray}

To simplify the expression, we use $g(x,r)$ to denote
\begin{equation}
    g(x,r)=\begin{cases}
               \int_{B(x,r)\cap{C^-}}f(y)dy,  x\in{C^+} \\
               \int_{B(x,r)\cap{C^+}}f(y)dy,  x\in{C^-}.
             \end{cases}
\end{equation}

Under general assumptions, when $n$ tends to infinity, the random variable $r$ will highly concentrate around its mean $\mathbb{E}(r_x^k)$.
Furthermore, as $k_n/n\rightarrow{0}$, $\mathbb{E}(r_x^k)$ tends to zero and the speed of convergence
\begin{eqnarray}\label{eq:EkNN}
\mathbb{E}(r_x^k)\approx(k\rho(x)/((n-1)f(x)\eta_d))^{1/d}
\end{eqnarray}
So the inner integral in the cut value can be approximated by $g(x,\mathbb{E}(r_x^k))$, which implies,
\begin{equation}
    \mathbb{E}(cut_n(S))\approx{n}(n-1)\int_{\mathbb{R}^d}f(x)g(x,\mathbb{E}(r_x^k))dx.
\end{equation}

The next trick is to decompose the integral over $\mathbb{R}^d$ into two orthogonal directions, i.e., the direction along the hyperplane $S$ and its normal direction (We use $\overrightarrow{n}$ to denote the unit normal vector):
\begin{equation}
    \int_{\mathbb{R}^d}f(x)g(x,\mathbb{E}(r_x^k))dx= \\
    \int_{S}\int_{-\infty}^{+\infty}f(s+t\overrightarrow{n})g(s+t\overrightarrow{n},\mathbb{E}(r_{s+t\overrightarrow{n}}^k))dtds.
\end{equation}
When $t>\mathbb{E}(r_{s+t\overrightarrow{n}}^k)$, the integral region of $g$ will be empty: $B(x,\mathbb{E}(r_x^k))\cap{C^-}=\emptyset$. On the other hand, when $x=s+t\overrightarrow{n}$ is close to $s\in{S}$, we have the approximation $f(x)\approx{f(s)}$:
\begin{eqnarray}
  &\int_{-\infty}^{+\infty}f(s+t\overrightarrow{n})g(s+t\overrightarrow{n},\mathbb{E}(r_{s+t\overrightarrow{n}}^k))dt \\
  &\approx 2\int_{0}^{\mathbb{E}(r_{s}^k)}f(s)\left[f(s)vol\left(B(s+t\overrightarrow{n},\mathbb{E}{r_s^k})\cap{C^-}\right)\right]dt  \\
  &= 2f^2(s)\int_{0}^{\mathbb{E}(r_{s}^k)}vol\left(B(s+t\overrightarrow{n},\mathbb{E}(r_s^k))\cap{C^-}\right)dt.
\end{eqnarray}

The term $vol\left(B(s+t\overrightarrow{n},\mathbb{E}(r_s^k))\cap{C^-}\right)$ is the volume of $d$-dim spherical cap of radius $\mathbb{E}(r_s^k))$, which is at distance $t$ to the center. Through direct computation we obtain:
\begin{equation}
    \int_{0}^{\mathbb{E}(r_{s}^k)}vol\left(B(s+t\overrightarrow{n},\mathbb{E}(r_s^k))\cap{C^-}\right)dt=\mathbb{E}(r_s^k)^{d+1}\frac{\eta_{d-1}}{d+1}.
\end{equation}
Combining the above step and plugging in the approximation of $\mathbb{E}(r_s^k)$ in Eq.(\ref{eq:EkNN}), we finish the proof.
\end{proof}

\begin{lem}\label{lem:expectation}
By choosing $l$ properly, as $m_2\rightarrow\infty$, it follows that,
$$ | \mathbb{E}\left[R(u)\right] - p(u)| \longrightarrow 0$$
\end{lem}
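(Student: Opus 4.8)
\textbf{Proof plan for Lemma~\ref{lem:expectation}.}
The plan is to show that $\mathbb{E}[R(u)] = \mathbb{P}\{G(x_1;D_1) > G(u;D_1)\}$ converges to $p(u) = \int_{\{f \le f(u)\}} f(x)\,dx$ by analyzing the statistic $G(\cdot)$ as a consistent estimator of a monotone transform of the density. First I would compute the limiting behavior of $G(u;D_j)$ for a \emph{fixed} point $u$: using the standard $k$-NN distance asymptotics (as in Eq.(\ref{eq:EkNN})), the $i$-th nearest neighbor distance $D_{(i)}(u)$ among $m_2$ i.i.d.\ draws satisfies $D_{(i)}(u) \approx \left( i / (m_2 f(u) \eta_d) \right)^{1/d}$, so the weighted average $\frac{1}{l}\sum_{i} (l/i)^{1/d} D_{(i)}(u)$ concentrates around a deterministic value proportional to $\left( l / (m_2 f(u) \eta_d) \right)^{1/d}$ — the weights $(l/i)^{1/d}$ are chosen precisely so each summand contributes the same leading-order term. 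Hence $G(u;D_j) \to c_{l,m_2}\, f(u)^{-1/d}$ in probability for a normalizing constant $c_{l,m_2}$ independent of $u$. The key consequence is that $G$ is, asymptotically, a strictly \emph{decreasing} function of $f(u)$, so the event $\{G(x_1;D_1) > G(u;D_1)\}$ becomes, in the limit, the event $\{f(x_1) < f(u)\}$, whose probability over $x_1 \sim f$ is exactly $p(u)$.

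To make this rigorous I would split $\mathbb{E}[R(u)] - p(u)$ into a ``bulk'' term and an ``error'' term. Condition on $x_1 = x$; then $\mathbb{E}[R(u)] = \int \mathbb{P}\{G(x;D_1) > G(u;D_1)\} f(x)\,dx$, where the inner probability is over the independent resamples $D_1$. For $x$ with $f(x)$ bounded away from $f(u)$ — say $|f(x) - f(u)| \ge \sigma_{m_2}$ for a slowly shrinking $\sigma_{m_2}$ — the concentration of both $G(x;D_1)$ and $G(u;D_1)$ around their (separated) means forces the inner probability to $\mathbb{I}\{f(x) < f(u)\}$ up to an exponentially small error; choosing $l = l(m_2) \to \infty$ slowly (e.g.\ $l \asymp \log m_2$) gives enough concentration while keeping the bias in the $k$-NN-distance approximation negligible. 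For $x$ in the ``boundary shell'' $\{|f(x) - f(u)| < \sigma_{m_2}\}$, I bound the contribution crudely by the measure of that shell, which is $\le M \sigma_{m_2}$ by the no-flat-regions regularity assumption; taking $\sigma_{m_2} \to 0$ kills this term. Combining, $|\mathbb{E}[R(u)] - p(u)| \le M\sigma_{m_2} + (\text{exponentially small}) \to 0$.

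The main obstacle I anticipate is controlling the concentration of $G(u;D_j)$ uniformly enough in the boundary regime: the naive variance/tail bound for a single nearest-neighbor distance degrades as $f(x)$ approaches $f(u)$, and one must balance three competing scales — the shell width $\sigma_{m_2}$, the deviation probability (which wants $l$ and $m_2$ large), and the approximation error in $\mathbb{E}(r_x^k) \approx (k\rho(x)/((n-1)f(x)\eta_d))^{1/d}$ (which wants $l/m_2$ small). The averaging over $l$ neighbors with the $(l/i)^{1/d}$ weights is what buys the needed variance reduction, and verifying that the weighted sum still concentrates at the right rate — rather than just a single order-statistic — is the technical heart of the argument. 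Everything else (the outer expectation over $x_1$, the dominated-convergence-style splitting, and the final assembly) is routine once this concentration estimate is in hand, and much of it parallels the $k$-NN statistic analysis used for anomaly detection in \citet{Zhao09,Zhao12}.
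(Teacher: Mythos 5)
Your plan is essentially the paper's own argument: write $\mathbb{E}[R(u)]=\mathbb{E}_x\left[\mathcal{P}_{D_1}\left(G(u;D_1)<G(x;D_1)\right)\right]$, show via nearest-neighbor asymptotics that $\mathbb{E}G$ is asymptotically a decreasing function of the density (the paper's Lemma 4), use concentration of $G(x;D_1)-G(u;D_1)$ (the paper uses McDiarmid) to replace the inner probability by $\mathbb{I}_{\{f(x)<f(u)\}}$ on the bulk, and bound the shell $\{|f(x)-f(u)|<\sigma_{m_2}\}$ by the no-flat-regions assumption. The only caveat is your parenthetical $l\asymp\log m_2$: with the paper's bounded-difference concentration (differences of order $C/l$) the deviation exponent is of order $l^{2+4/d}/m_2^{1+4/d}$, so one needs the polynomial choice $l=m_2^{\alpha}$ with $\frac{d+4}{2d+4}<\alpha<1$ rather than a logarithmic $l$, unless you substitute a sharper concentration bound for the order-statistic average.
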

\begin{proof}
Take expectation with respect to $D$:
\begin{eqnarray}
\mathbb{E}_D\left[R(u)\right]
&=&\mathbb{E}_{D\backslash D_0}\left[\mathbb{E}_{D_0}\left[\frac{1}{m_1}\sum_{j=1}^{m_1}
 \mathbb{I}_{\{G(u;D_j)<G(x_j;D_j)\}}\right]\right]\\
&=&\frac{1}{m_1}\sum_{j=1}^{m_1}\mathbb{E}_{x_j}\left[
\mathbb{E}_{D_j}\left[
\mathbb{I}_{\{G(u;D_j)<G(x_j;D_j)\}}\right]\right]\\
&=&\mathbb{E}_x\left[\mathcal{P}_{D_1}\left(G(u;D_1)<G(x;D_1)\right)\right]
\end{eqnarray}
The last equality holds due to the i.i.d symmetry of $\{x_1,...,x_{m_1}\}$ and $D_1,...,D_{m_1}$. We fix both $u$ and $x$ and temporarily discarding $\mathbb{E}_{D_1}$. Let $F_x(y_1,...,y_{m_2})=G(x)-G(u)$, where $y_1,...,y_{m_2}$ are the $m_2$ points in $D_1$. It follows:
\begin{equation}
    \mathcal{P}_{D_1}\left(G(u)<G(x)\right)
    =\mathcal{P}_{D_1}\left(F_x(y_1,...,y_{m_2})>0\right)
    =\mathcal{P}_{D_1}\left(F_x-\mathbb{E}F_x>-\mathbb{E}F_x\right).
\end{equation}

To check McDiarmid's requirements, we replace $y_j$ with $y_j'$. It is easily verified that $\forall j=1,...,m_2$,
\begin{equation}\label{equ:mcdiarmid_condition}
    |F_x(y_1,...,y_{m_2})-F_x(y_1,...,y_j',...,y_{m_2})| \leq 2^{\frac{1}{d}}\frac{2C}{l} \leq \frac{4C}{l}
\end{equation}
where $C$ is the diameter of support. Notice despite the fact that $y_1,...,y_{m_2}$ are random vectors we can still apply MeDiarmid's inequality, because according to the form of $G$, $F_x(y_1,...,y_{m_2})$ is a function of $m_2$ i.i.d random variables $r_1,...,r_{m_2}$ where $r_i$ is the distance from $x$ to $y_i$.
Therefore if $\mathbb{E}F_x<0$, or $\mathbb{E}G(x)<\mathbb{E}G(u)$, we have by McDiarmid's inequality,
\begin{equation}
    \mathcal{P}_{D_1}\left(G(u)<G(x)\right)
    = \mathcal{P}_{D_1}\left( F_x > 0 \right)
    = \mathcal{P}_{D_1}\left( F_x-\mathbb{E}F_x>-\mathbb{E}F_x \right)
    \leq \exp\left(-\frac{(\mathbb{E}F_x)^2 l^2}{8C^2m_2}\right)
\end{equation}
Rewrite the above inequality as:
\begin{equation}\label{equ:bound_no_expectation}
    \mathbb{I}_{\{\mathbb{E}F_x>0\}}-e^{-\frac{(\mathbb{E}F_x)^2 l^2}{8C^2m_2}}
    \leq \mathcal{P}_{D_1}\left( F_x > 0 \right)
    \leq \mathbb{I}_{\{\mathbb{E}F_x>0\}}+e^{-\frac{(\mathbb{E}F_x)^2 l^2}{8C^2m_2}}
\end{equation}
It can be shown that the same inequality holds for $\mathbb{E}F_x>0$, or $\mathbb{E}G(x)>\mathbb{E}G(u)$. Now we take expectation with respect to $x$:
\begin{equation}\label{equ:bound_with_expectation}
    \mathcal{P}_x\left(\mathbb{E}F_x>0\right)-\mathbb{E}_x\left[e^{-\frac{(\mathbb{E}F_x)^2 l^2}{8C^2m_2}}\right] \leq
    \mathbb{E}\left[\mathcal{P}_{D_1}\left( F_x > 0 \right)\right] \leq \mathcal{P}_x\left(\mathbb{E}F_x>0\right)+\mathbb{E}_x\left[e^{-\frac{(\mathbb{E}F_x)^2 l^2}{8C^2m_2}}\right]
\end{equation}
Divide the support of $x$ into two parts, $\mathbb{X}_1$ and $\mathbb{X}_2$, where $\mathbb{X}_1$ contains those $x$ whose density $f(x)$ is relatively far away from $f(u)$, and $\mathbb{X}_2$ contains those $x$ whose density is close to $f(u)$. We show for $x \in \mathbb{X}_1$, the above exponential term converges to 0 and $\mathcal{P}\left(\mathbb{E}F_x>0\right) = \mathcal{P}_x\left( f(u)>f(x) \right)$, while the rest $x\in\mathbb{X}_2$ has very small measure. Let $A(x)=\left(\frac{k}{f(x) c_d m_2}\right)^{1/d}$. By Lemma \ref{lem:bound_expectation} we have:
\begin{equation}
    | \mathbb{E}G(x) - A(x) | \leq \gamma \left(\frac{l}{m_2}\right)^{\frac{1}{d}} A(x)
    \leq \gamma \left(\frac{l}{m_2}\right)^{\frac{1}{d}} \left(\frac{l}{f_{min}c_d m_2}\right)^{\frac{1}{d}}
    =    \left(\frac{\gamma_1}{c_d^{1/d}}\right) \left(\frac{l}{m_2}\right)^{\frac{2}{d}}
\end{equation}
where $\gamma$ denotes the big $O(\cdot)$, and $\gamma_1 = \gamma \left(\frac{1}{f_{min}}\right)^{1/d}$. Applying uniform bound we have:
\begin{equation}
    A(x)-A(u)- 2\left(\frac{\gamma_1}{c_d^{1/d}}\right) \left(\frac{l}{m_2}\right)^{\frac{2}{d}}
    \leq \mathbb{E}\left[G(x) - G(u)\right]
    \leq A(x)-A(u)+ 2\left(\frac{\gamma_1}{c_d^{1/d}}\right) \left(\frac{l}{m_2}\right)^{\frac{2}{d}}
\end{equation}
Now let $\mathbb{X}_1=\{ x:|f(x)-f(u)|\geq 3\gamma_1 d f_{min}^{\frac{d+1}{d}} \left(\frac{l}{m_2}\right)^{\frac{1}{d}} \}$. For $x\in \mathbb{X}_1$, it can be verified that $|A(x)-A(u)|\geq 3\left(\frac{\gamma_1}{c_d^{1/d}}\right) \left(\frac{l}{m_2}\right)^{\frac{2}{d}}$, or $|\mathbb{E}\left[G(x) - G(u)\right]| > \left(\frac{\gamma_1}{c_d^{1/d}}\right) \left(\frac{l}{m_2}\right)^{\frac{2}{d}}$, and $\mathbb{I}_{\{f(u)>f(x)\}}=\mathbb{I}_{\{\mathbb{E}G(x)>\mathbb{E}G(u)\}}$. For the exponential term in Equ.(\ref{equ:bound_no_expectation}) we have:
\begin{equation}
    \exp\left(-\frac{(\mathbb{E}F_x)^2 l^2}{2C^2m_2}\right)
    \leq \exp\left(-\frac{ \gamma_1^2 l^{2+\frac{4}{d}} }{ 8C^2 c_d^{\frac{2}{d}} m_2^{1+\frac{4}{d}} } \right)
\end{equation}
For $x\in \mathbb{X}_2=\{x:|f(x)-f(u)|< 3\gamma_1 d \left(\frac{l}{m_2}\right)^{\frac{1}{d}}f_{min}^{\frac{d+1}{d}} \}$, by the regularity assumption, we have $\mathcal{P}(\mathbb{X}_2)<3M\gamma_1 d \left(\frac{l}{m_2}\right)^{\frac{1}{d}}f_{min}^{\frac{d+1}{d}}$. Combining the two cases into Equ.(\ref{equ:bound_with_expectation}) we have for upper bound:
\begin{eqnarray}
  \mathbb{E}_D\left[R(u)\right]
  &=& \mathbb{E}_x\left[\mathcal{P}_{D_1}\left(G(u)<G(x)\right)\right] \\
  &=& \int_{\mathbb{X}_1}\mathcal{P}_{D_1}\left(G(u)<G(x)\right)f(x)dx +  \int_{\mathbb{X}_2}\mathcal{P}_{D_1}\left(G(u)<G(x)\right)f(x)dx \\
  &\leq& \left( \mathcal{P}_x\left(f(u)>f(x)\right) + \exp\left(-\frac{ \gamma_1^2 l^{2+\frac{4}{d}} }{ 8C^2 c_d^{\frac{1}{d}} m_2^{1+\frac{4}{d}} } \right) \right)\mathcal{P}(x\in \mathbb{X}_1) + \mathcal{P}(x\in \mathbb{X}_2) \\
  &\leq&  \mathcal{P}_x\left(f(u)>f(x)\right) + \exp\left(-\frac{ \gamma_1^2 l^{2+\frac{4}{d}} }{ 8C^2 c_d^{\frac{1}{d}} m_2^{1+\frac{4}{d}} } \right) + 3M\gamma_1 d f_{min}^{\frac{d+1}{d}} \left(\frac{l}{m_2}\right)^{\frac{1}{d}}
\end{eqnarray}
Let $l=m_2^\alpha$ such that $\frac{d+4}{2d+4}<\alpha<1$, and the latter two terms will converge to 0 as $m_2 \rightarrow \infty$. Similar lines hold for the lower bound. The proof is finished.
\end{proof}

\begin{lem}\label{lem:bound_expectation}
Let $A(x)=\left(\frac{l}{m c_d f(x)}\right)^{1/d}$, $\lambda_1 = \frac{\lambda}{f_{min}}\left(\frac{1.5}{c_d f_{min}}\right)^{1/d}$. By choosing $l$ appropriately, the expectation of $l$-NN distance $\mathbb{E}D_{(l)}(x)$ among $m$ points satisfies:
\begin{equation}
    | \mathbb{E}D_{(l)}(x) - A(x) | = O\left( A(x) \lambda_1 \left(\frac{l}{m}\right)^{1/d} \right)
\end{equation}
\end{lem}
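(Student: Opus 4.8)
The plan is to convert the $l$-th nearest-neighbour distance into a binomial counting random variable, control that variable by Chernoff bounds, and then integrate. Write $N_r = |\{i : x_i \in B(x,r)\}|$; since the $m$ points are i.i.d.\ from $f$, $N_r \sim \mathrm{Binom}(m,q(r))$ with $q(r) = \int_{B(x,r)} f(y)\,dy$, and the key identity is $D_{(l)}(x) > r \iff N_r \le l-1$, so that $\mathbb{E}D_{(l)}(x) = \int_0^{\infty} \mathbb{P}(N_r \le l-1)\,dr$ (the integrand vanishes for $r$ above the diameter of the support, where $N_r = m \ge l$). First I would pin down $q(r)$ at the relevant scale: the smoothness hypothesis $\|\nabla f\|\le\lambda$ gives $f(x)-\lambda r \le f(y)\le f(x)+\lambda r$ on $B(x,r)$, hence $(f(x)-\lambda r)c_d r^d \le q(r)\le (f(x)+\lambda r)c_d r^d$, i.e.\ $q(r) = f(x)c_d r^d\bigl(1+O(\lambda r/f(x))\bigr)$. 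Note $A(x)$ is exactly the radius at which $f(x)c_d r^d = l/m$.

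Next, fix a tolerance $\delta=\delta(m)$ and set $r_\pm = A(x)(1\pm\delta)$. Using the density estimate, $m\,q(r_+) \ge l(1+\delta)^d\bigl(1-\lambda A(x)(1+\delta)/f(x)\bigr)$, which exceeds $(1+\tfrac{d\delta}{2})\,l$ once $\delta$ is a suitable fixed multiple of $\lambda A(x)/f(x)$ (here $f(x)\ge f_{min}$ is used); a Chernoff lower-tail bound then yields $\mathbb{P}(N_{r_+}\le l-1)\le e^{-c\,l\delta^2}$, and symmetrically $\mathbb{P}(N_{r_-}\ge l)\le e^{-c\,l\delta^2}$. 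Monotonicity of $N_r$ in $r$ extends these bounds to all $r\ge r_+$ and all $r\le r_-$. Now split $\mathbb{E}D_{(l)}(x) = \int_0^{r_-} + \int_{r_-}^{r_+} + \int_{r_+}^{\mathrm{diam}(C)}$: on $[0,r_-]$ the integrand is $1-\mathbb{P}(N_r\ge l)\ge 1-e^{-c\,l\delta^2}$, contributing $r_-$ up to $O(e^{-c\,l\delta^2})$; on $[r_+,\mathrm{diam}(C)]$ it is $\le e^{-c\,l\delta^2}$, contributing $O(e^{-c\,l\delta^2})$; the middle piece is trapped in $[0,\,r_+-r_-]=[0,\,2\delta A(x)]$. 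Hence $|\mathbb{E}D_{(l)}(x)-A(x)| \le 2\delta A(x) + O(e^{-c\,l\delta^2})$.

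It remains to optimize $\delta$. Substituting $A(x)=(l/(mc_d f(x)))^{1/d}$ and bounding $f(x)\ge f_{min}$ gives $\delta A(x)\asymp \frac{\lambda}{f(x)}A(x)^2 = A(x)\cdot O\!\bigl(\lambda_1 (l/m)^{1/d}\bigr)$ (the $1.5$ inside $\lambda_1$ giving slack to absorb the Chernoff and counting constants), which is the claimed error. The exponential remainder is negligible by comparison precisely when $l\delta^2\to\infty$; since $\delta\asymp\lambda A(x)/f(x)$ this is $l^{1+2/d}/m^{2/d}\to\infty$, and this is exactly what ``choosing $l$ appropriately'' means --- it is guaranteed by the choice $l=m_2^{\alpha}$ with $\alpha$ close to $1$ used in Lemma~\ref{lem:expectation}.

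The main obstacle is the bookkeeping at the interface between the ``bulk'' and ``tail'' regimes: $\delta$ must be small enough that $2\delta A(x)$ stays inside the advertised $O(A(x)\lambda_1(l/m)^{1/d})$ error, yet large enough that $l\delta^2\to\infty$ so that the Chernoff tails, integrated over an $O(1)$-length interval, are of lower order --- this is the only place the growth condition on $l$ is needed. A secondary technical nuisance is that $f$ cannot literally be smooth, compactly supported, and bounded below at once, so the estimate for $q(r)$ should be read as holding for $x$ at positive distance from $\partial C$ and $r$ below that distance; the residual small-measure set of $x$ is absorbed just as the $\mathbb{X}_2$ term is in Lemma~\ref{lem:expectation}. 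An alternative, slightly cleaner route bypasses the integral: $q(\|X-x\|)\sim\mathrm{Uniform}(0,1)$, so $q(D_{(l)}(x))=V_{(l)}\sim\mathrm{Beta}(l,m-l+1)$ and $\mathbb{E}D_{(l)}(x)=\mathbb{E}\bigl[q^{-1}(V_{(l)})\bigr]$; expanding $q^{-1}$ about $\mathbb{E}V_{(l)}=l/(m+1)$ and using $\mathrm{Var}(V_{(l)})=O(l/m^2)$ gives the same bound, with the $O(\lambda A(x)/f(x))$ relative bias in $q^{-1}$ dominating the $O(1/\sqrt l)$ fluctuation term.
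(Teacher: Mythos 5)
Your proposal is correct and follows essentially the same route as the paper's own proof: both arguments reduce $D_{(l)}(x)$ to a binomial count of points in a ball of radius near $A(x)$, apply a Chernoff bound at radii $A(x)(1\pm\delta)$ (the paper uses the equivalent quantiles $r(x,(1\pm\delta_m)l/m)$ and then converts to $A(x)$ via the same Lipschitz estimate of the ball's probability mass), and absorb the exponential tail, scaled by the support diameter, by the growth condition on $l$. Your tolerance $\delta\asymp\lambda A(x)/f(x)$ versus the paper's decoupled $\delta_m=m^{-1/4}$, $l=m^{(3d+8)/(4d+8)}$ is only a bookkeeping difference, and both choices are compatible with the range of $l$ used in Lemma~\ref{lem:expectation}.
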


\begin{proof}
Denote $r(x,\alpha)=\min\{r:\mathcal{P}\left(B(x,r)\right)\geq \alpha\}$. Let $\delta_m \rightarrow 0$ as $m \rightarrow \infty$, and $0<\delta_{m}<1/2$.
Let $U\sim Bin(m,(1+\delta_m)\frac{l}{m})$ be a binomial random variable, with $\mathbb{E}U = (1+\delta_{m})l$. We have:
\begin{eqnarray}
  \mathcal{P}\left(D_{(l)}(x)>r(x,(1+\delta_m)\frac{l}{m})\right)
  &=& \mathcal{P}\left(U<l\right) \\
  &=& \mathcal{P}\left(U<\left(1-\frac{\delta_m}{1+\delta_m}\right)(1+\delta_m)l\right) \\
  &\leq& \exp\left(-\frac{\delta_m^2 l}{2(1+\delta_m)}\right)
\end{eqnarray}
The last inequality holds from Chernoff's bound. Abbreviate $r_1 = r(x,(1+\delta_m)\frac{l}{m})$, and $\mathbb{E}D_{(l)}(x)$ can be bounded as:
\begin{eqnarray}
  \mathbb{E}D_{(l)}(x)
  &\leq& r_1\left[1-\mathcal{P}\left(D_{(l)}(x)>r_1\right)\right] + C\mathcal{P}\left(D_{(l)}(x)>r_1\right)  \\
  &\leq& r_1 + C \exp\left(-\frac{\delta_m^2 l}{2(1+\delta_m)}\right)
\end{eqnarray}
where $C$ is the diameter of support. Similarly we can show the lower bound:
\begin{equation}
    \mathbb{E}D_{(l)}(x) \geq r(x,(1-\delta_m)\frac{l}{m}) - C \exp\left(-\frac{\delta_m^2 l}{2(1-\delta_m)}\right)
\end{equation}
Consider the upper bound. We relate $r_1$ with $A(x)$. Notice $\mathcal{P}\left(B(x,r_1)\right)=(1+\delta_m)\frac{l}{m} \geq c_d r_1^d f_{min}$, so a fixed but loose upper bound is $r_1 \leq \left(\frac{(1+\delta_m)l}{c_d f_{min} m}\right)^{1/d} = r_{max}$. Assume $l/m$ is sufficiently small so that $r_1$ is sufficiently small. By the smoothness condition, the density within $B(x,r_1)$ is lower-bounded by $f(x)-\lambda r_1$, so we have:
\begin{eqnarray}
  \mathcal{P}\left(B(x,r_1)\right) &=& (1+\delta_m)\frac{l}{m} \\
  &\geq& c_d r_1^d \left( f(x)-\lambda r_1 \right)\\
  &=& c_d r_1^d f(x)\left( 1-\frac{\lambda}{f(x)}r_1 \right) \\
  &\geq& c_d r_1^d f(x)\left( 1-\frac{\lambda}{f_{min}}r_{max} \right)
\end{eqnarray}
That is:
\begin{equation}
    r_1 \leq A(x)\left( \frac{1+\delta_m}{1-\frac{\lambda}{f_{min}}r_{max}} \right)^{1/d}
\end{equation}
Insert the expression of $r_{max}$ and set $\lambda_1 = \frac{\lambda}{f_{min}}\left(\frac{1.5}{c_d f_{min}}\right)^{1/d}$, we have:
\begin{eqnarray}
  \mathbb{E}D_{(l)}(x)-A(x) &\leq& A(x)\left( \left(\frac{1+\delta_m}{1-\lambda_1 \left(\frac{l}{m}\right)^{1/d}}\right)^{1/d} -1 \right) + C \exp\left(-\frac{\delta_m^2 l}{2(1+\delta_m)}\right) \\
  &\leq& A(x)\left( \frac{1+\delta_m}{1-\lambda_1 \left(\frac{l}{m}\right)^{1/d}}-1 \right) + C \exp\left(-\frac{\delta_m^2 l}{2(1+\delta_m)}\right) \\
  &=& A(x)\frac{\delta_m + \lambda_1 \left(\frac{l}{m}\right)^{1/d}}{1-\lambda_1\left(\frac{l}{m}\right)^{1/d}} + C \exp\left(-\frac{\delta_m^2 l}{2(1+\delta_m)}\right) \\
  &=& O\left( A(x) \lambda_1 \left(\frac{l}{m}\right)^{1/d} \right)
\end{eqnarray}
The last equality holds if we choose $l=m^{\frac{3d+8}{4d+8}}$ and $\delta_m=m^{-\frac{1}{4}}$. Similar lines follow for the lower bound. Combine these two parts and the proof is finished.

\end{proof}

%
%
%
%

\end{document}